\def\BState{\State\hskip-\ALG@thistlm}
\newcommand{\noop}[1]{}
\def\NN{\mathbb N}
\def\RR{\mathbb R}
\def\PP{\mathcal P}
\def\HH{\mathcal H}
\def\LL{\mathscr L}
\def\xx{\mathbf{x}}
\def\yy{\mathbf{y}}
\def\zz{\mathbf{z}}
\newcommand{\la} {\lambda}
\newcommand{\al} {\alpha}
\newtheorem{theorem}{Theorem}[section]
\newtheorem{definition}{Definition}[section]
\newtheorem{proposition}{Proposition}[section]
\newtheorem{assumption}{Assumption}[section]
\newtheorem{remark}{Remark}[section]
\begin{document}
\title{Manifold regularization based on Nystr{\"o}m type subsampling}
\author{Abhishake Rastogi{\footnote{Corresponding Author, Email address: abhishekrastogi2012@gmail.com}}, Sivananthan Sampath{\footnote{Email address: siva@maths.iitd.ac.in}}\\
{\it Department of Mathematics}\\{\it Indian Institute of
Technology Delhi}\\{\it New Delhi 110016, India}}
\date{}
\maketitle
\begin{abstract}
In this paper, we study the Nystr{\"o}m type subsampling for large scale kernel methods to reduce the computational complexities of big data. We discuss the multi-penalty regularization scheme based on Nystr{\"o}m type subsampling which is motivated from well-studied manifold regularization schemes. We develop a theoretical analysis of multi-penalty least-square regularization scheme under the general source condition in vector-valued function setting, therefore the results can also be applied to multi-task learning problems. We achieve the optimal minimax convergence rates of multi-penalty regularization using the concept of effective dimension for the appropriate subsampling size. We discuss an aggregation approach based on linear function strategy to combine various Nystr{\"o}m approximants. Finally, we demonstrate the performance of multi-penalty regularization based on Nystr{\"o}m type subsampling on Caltech-101 data set for multi-class image classification and NSL-KDD benchmark data set for intrusion detection problem.
\end{abstract}
{\bf Keywords:} Multi-task learning; Manifold learning; Multi-penalty regularization; Nystr{\"o}m type subsampling; Optimal rates; Linear functional strategy.\\
{\bf  Mathematics Subject Classification 2010:} 68T05, 68Q32.
\section{Introduction}\label{Introduction}
Multi-task learning is an approach which learns multiple tasks simultaneously. The problem has potential to learn the structure of the related tasks. The idea is that exploring task relatedness can lead to improved performance. Various learning algorithms are studied to incorporate the structure of task relations in literature \cite{Argyriou,Ciliberto1,Ciliberto,Evgeniou1}. In agreement with past empirical work on multi-task learning, learning multiple related tasks simultaneously has been empirically \cite{Ando,Bakker,Evgeniou1,Evgeniou07,Jebara,Torralba,Yu,Zhang05} and theoretically \cite{Ando,Baxter,Ben} shown significantly improved performance relative to learning each task independently. Multi-task learning is becoming interesting due to its applications in computer vision, image processing and many other fields such as object detection/classification \cite{Minh}, image denoising, inpainting, finance and economics forecasting predicting \cite{Greene}, marketing modeling for the preferences of many individuals \cite{Allenby,Arora} and in bioinformatics for example to study tumor prediction from multiple micro–array data sets or analyze data from multiple related diseases.

In this work, we discuss multi-task learning approach that considers a notion of relatedness based on the concept of manifold regularization. In scalar-valued function setting, Belkin et al. \cite{Belkin} introduced the concept of manifold regularization which focus on a semi-supervised framework that incorporates labeled and unlabeled data in a general-purpose learner. Minh and Sindhwani \cite{Minh11} generalized the concept of manifold learning for vector-valued functions which exploits output inter-dependencies while enforcing smoothness with respect to input data geometry. Further, Minh and Sindhwani \cite{Minh} present a general learning framework that encompasses learning across three different paradigms, namely vector-valued, multi-view and semi-supervised learning simultaneously. Multi-view learning approach is considered to construct the regularized solution based on different views of the input data using different hypothesis spaces \cite{Brefeld,Luo13,Luo,Minh,Rosenberg,Sindhwani,Sun}. Micchelli and Pontil \cite{Micchelli1} introduced the concept of vector-valued reproducing kernel Hilbert spaces to facilitate theory of multi-task learning. Also, the fact that every vector-valued RKHS is corresponding to some operator-valued positive definite kernel, reduces the problem of choosing appropriate RKHS (hypothesis space) to choosing appropriate kernel \cite{Micchelli1}. In paper \cite{Bach,Bucak,Micchelli}, the authors proposed multiple kernel learning from a set of kernels. Here we consider the direct sum of reproducing kernel Hilbert spaces as the hypothesis space.

Multi-task learning is studied under the elegant and effective framework of kernel methods. The expansion of automatic data generation and acquisition bring data of huge size and complexity which raises challenges to computational capacities. In order to tackle these difficulties, various techniques are discussed in the literature such as replacing the empirical kernel matrix with a smaller matrix obtained by (column) subsampling \cite{Bach13,Rudi,Williams}, greedy-type algorithms \cite{Smola}, divide-and-conquer approach \cite{Guo17,Guo,Zhang15}. We are inspired from the work of Kriukova et al. \cite{Kriukova16} in which the authors discussed an approach to aggregate various regularized solutions based on Nystr{\"o}m subsampling in single-penalty regularization. Here we consider the so-called Nystr{\"o}m type subsampling in large scale kernel methods for dealing with big data which particularly can be seen as a regularized projection scheme. We achieve the optimal convergence rates for multi-penalty regularization based on the Nystr{\"o}m type subsampling approach, provided the subsampling size is appropriate. We adapt the aggregation approach for multi-task learning manifold regularization scheme to improve the accuracy of the results. We consider the linear combination of Nystr{\"o}m approximants and try to obtain a combination of the approximants which is closer to the target function. The coefficients of the linear combination are estimated by means of the linear functional strategy. The aggregation approach tries to accumulate the information hidden inside various approximants to produce the estimator of the target function \cite{Kriukova,Kriukova16} (also see reference therein).

The paper is organized as follows. In Section 2, we describe the framework of vector-valued multi-penalized learning problem with some basic definitions and notations. In Section 3, we discuss the convergence issues of the vector-valued multi-penalty regularization scheme based on Nystr{\"o}m type subsampling in the norm in $\LL^2_{\rho_X}$ and the norm in $\HH$. In Section 4, we discuss the aggregation approach to accumulate various estimators based on the Nystr{\"o}m type subsampling. In the last section, we demonstrate the performance of multi-penalty regularization based on Nystr{\"o}m type subsampling on Caltech-101 data set for multi-class image classification and NSL-KDD benchmark data set for intrusion detection problem.

\section{Multi-task learning via vector-valued RKHS}
The problem of learning multiple tasks jointly can be modeled by the vector-valued functions $f:X\to \RR^T$ whose components represent individual task-predictors, i.e., $f=(f_1,\ldots,f_T)$ for $f_t:X \to \RR$ ($1\leq t \leq T$). Here we consider general framework of vector-valued functions $f:X\to Y$ developed by Micchelli and Pontil \cite{Micchelli1} to address the multi-task learning algorithm. We
consider the concept of vector-valued reproducing kernel Hilbert space which is the extension of well-known scalar-valued reproducing kernel Hilbert space.

\begin{definition}{\bf Vector-valued reproducing kernel Hilbert space (RKHSvv).}
Let $X$ be a non-empty set, $(Y,\langle\cdot,\cdot\rangle_Y)$ be a real separable Hilbert space. The Hilbert space of functions from $X$ to $Y$ is called reproducing kernel Hilbert space if for any $x\in X$ and $y\in Y$, the linear functional which maps $f\in \HH$ to $\langle y,f(x)\rangle_Y$ is continuous.
\end{definition}

Suppose $\mathcal{L}(Y)$ be the Banach space of bounded linear operators on $Y$. A function $K:X\times X\to \mathcal{L}(Y)$ is said to be an operator-valued positive definite kernel if for each pair $(x,z)\in X\times X$, $K(x,z)^*=K(z,x)$, and for every finite set of points $\{x_i\}_{i=1}^N\subset X$ and $\{y_i\}_{i=1}^N\subset Y$,
$$\sum\limits_{i,j=1}^N\langle y_i,K(x_i,x_j)y_j\rangle_Y\geq 0.$$

There exists a unique Hilbert space $(\HH,\langle\cdot,\cdot\rangle_{\HH})$ of functions on  $X$ satisfying the following conditions:
\begin{enumerate}[(i)]
\item for all $x\in X$ and $y\in Y$, the functions $K_xy=K(\cdot,x)y\in \HH$, defined by
$$(K_xy)z=K(z,x)y \text{ for all } z\in X,$$
\item the span of the set $\{K_xy:x\in X, y\in Y\}$ is dense in $\HH$, and
\item for all $f\in \HH$, $\langle f(x),y\rangle_Y=\langle f,K_xy\rangle_{\HH}$ (reproducing property).
\end{enumerate}

Moreover, there is one to one correspondence between operator-valued positive definite kernels and vector-valued RKHS \cite{Micchelli1}.

In the learning theory, we are given with the random samples $\{(x_i,y_i):1\leq i \leq m\}$ drawn identically and independently from a unknown joint probability measure $\rho$ on the sample space $X\times Y$. We assume that the input space $X$ is a locally compact countable Hausdorff space and the output space $(Y,\langle\cdot,\cdot\rangle_Y)$ is a real separable space.  The goal is to predict the output values for the inputs. Suppose we predict $y$ for the input $x$ based on our algorithm but the true output is $y'$. Then we suffer a loss $\ell(y,y')$, where the loss function $\ell:Y\times Y\to \RR^+$. A widely used approach based on the square loss function $\ell(y,y')=||y-y'||_Y^2$ in regularization theory is Tikhonov type regularization:
$$\frac{1}{m}\sum\limits_{i=1}^m||f(x_i)-y_i||_Y^2+\la||f||_\HH^2,$$
The regularization parameter $\la$ controls the trade off between the error measuring the fitness of data and the complexity of the solution measured in the RKHS-norm.

We discuss the multi-task learning approach that considers a notion of task relatedness based on the concept of manifold regularization. In this approach, different RKHSvv are used to estimate the target functions based on different views of input data, such as different features or modalities and a data-dependent regularization term is used to enforce consistency of output values from different views of the same input example.

We consider the following regularization scheme to analyze the multi-task manifold learning scheme corresponding to different views:
\begin{equation}\label{manifold.funcl}
\mathop{\text{arg}\min}_{f \in \HH_{K_1}\bigoplus\ldots\bigoplus\HH_{K_v}} \left\{\frac{1}{m}\sum\limits_{i=1}^m||f(x_i)-y_i||_Y^2+\la_A||f||_\HH^2+\la_I\langle \mathbf{f},M\mathbf{f}\rangle_{Y^n}\right\},
\end{equation}
where $\{(x_i,y_i)\in X \times Y:1 \leq i \leq m\}\bigcup \{x_i \in X:m < i \leq n\}$ is given set of labeled and unlabeled data, $M$ is a symmetric, positive operator, $\la_A, \la_I \geq 0$ and $\mathbf{f}=(f(x_1),\ldots,f(x_n))^T \in Y^n$.

The direct sum of reproducing kernel Hilbert spaces $\HH=\HH_{K_1}\bigoplus\ldots\bigoplus\HH_{K_v}$ is also a RKHS. Suppose $K$ is the kernel corresponding to RKHS $\HH$.

Throughout this paper we assume the following hypothesis:
\begin{assumption}Let $\HH$ be a reproducing kernel Hilbert space of functions $f:X\to Y$ such that
\begin{enumerate}[(i)]
\item For all $x\in X$, $K_x:Y\to\HH$ is a Hilbert-Schmidt operator and $\kappa:=\sqrt{\sup\limits_{x \in X}Tr(K_x^*K_x)}<\infty$, where for Hilbert-Schmidt operator $A\in \mathcal{L}(\HH)$, $Tr(A):=\sum\limits_{k=1}^\infty\langle Ae_k,e_k\rangle$ for an orthonormal basis $\{e_k\}_{k=1}^\infty$ of $\HH$.
\item The real-valued function $\phi:X\times X \to \RR$, defined by $\phi(x,t)=\langle K_tv,K_xw\rangle_\HH$, is measurable $\forall  v,w\in Y$.
\end{enumerate}
\end{assumption}

By the representation theorem \cite{Minh}, the solution of the multi-penalized regularization problem (\ref{manifold.funcl}) will be of the form:
\begin{equation}\label{represnter.thm}
f_{\zz,\la}=\sum\limits_{i=1}^n K_{x_i}c_i, \text{  for some  } \mathbf{c}=(c_1,\ldots,c_n)=(\mathbb{J}\mathbb{K}_{n}+\la_A m\mathbb{I}_n+\la_I mL\mathbb{K}_{n})^{-1} \yy_n,
\end{equation}

where $(\mathbb{K}_{n})_{ij} = K(x_i,x_j)$ with $i,j \in \{1,\ldots,n\}$, $\mathbb{J} = \text{diag}(1, \ldots, 1,0, \ldots, 0)$ is $n\times n$ diagonal matrix with the first $m$ diagonal entries as $1$ and the rest $0$, $\mathbb{I}_n$ is identity of order $n$ and $\yy_n=[y_1,\ldots,y_m,0,\ldots,0]^T\in Y^n$.

In order to obtain the computationally efficient algorithm from the functional (\ref{manifold.funcl}), we consider the Nystr{\"o}m type subsampling which uses the idea of replacing the empirical kernel matrix with a smaller matrix obtained by (column) subsampling \cite{Kriukova16,Smola,Williams}. This can also be seen as a restriction of the optimization functional (\ref{manifold.funcl}) over the space:
\begin{equation*}
\HH^{\xx_s}:=\{f|f=\sum\limits_{i=1}^sK_{x_i}c_i,\mathbf{c}=(c_1,\ldots,c_s)\in Y^s\},
\end{equation*}
where $s\ll n$ and $\xx_s=(x_1,\ldots, x_s)$ is a subset of the input points in the training set.

The minimizer of the manifold regularization scheme (\ref{manifold.funcl}) over the space $\HH^{\xx_s}$ will be of the form:
\begin{equation}\label{Nys.approx}
f_{\zz,\la}^s=\sum\limits_{i=1}^s K_{x_i}c_i, \text{ for } \mathbf{c}=(c_1,\ldots,c_s)=(\mathbb{K}_{ms}^T\mathbb{K}_{ms}+\la_A m\mathbb{K}_{ss}+\la_I m\mathbb{K}_{ns}^TL\mathbb{K}_{ns})^\dagger \mathbb{K}_{ms}^T\yy,
\end{equation}
where $A^\dagger$ denotes the Moore-Penrose pseudoinverse of a matrix $A$, $(\mathbb{K}_{ms})_{ij} = K(x_i,\tilde{x}_j)$,
$(\mathbb{K}_{ss})_{kj} = K(\tilde{x}_k,\tilde{x}_j)$ with $i \in \{1,\ldots,m\}$ and $j, k \in \{1,\ldots,s\}$ and $\yy=[y_1,\ldots,y_m]^T\in Y^m$.

The computational time of the Nystr{\"o}m approximation (\ref{Nys.approx}) is of order $\mathcal{O}(sn^2)$ while the computational time complexity of standard manifold regularized solution (\ref{represnter.thm}) is of order $\mathcal{O}(n^3)$. Therefore, the randomized subsampling methods can break the memory barriers and consequently achieve much better time complexity compare with standard manifold regularization algorithm.

We analyze the more general vector-valued multi-penalty regularization scheme based on Nystr{\"o}m subsampling:
\begin{equation}\label{multi.plty.funcl1}
f_{\zz,\la}^s=\mathop{\text{arg}\min}_{f \in \HH^{\xx_s}} \left\{\frac{1}{m}\sum\limits_{i=1}^m||f(x_i)-y_i||_Y^2+\la_0||f||_\HH^2+\sum\limits_{j=1}^p\la_j||B_jf||_\HH^2\right\},
\end{equation}
where $B_j:\HH\to\HH~(1\leq j\leq p)$ are bounded operators, $\la_0>0$, $\la_j~(1\leq j\leq p)$ are non-negative real numbers and $\la$ denotes the ordered set $(\la_0,\la_1,\ldots,\la_p)$.

Here we introduce the sampling operator which is useful in the analysis of regularization schemes.
\begin{definition}
The {\bf sampling operator} $S_{\xx}:\HH \to Y^m$ associated with a discrete subset $\xx=(x_i)_{i=1}^m$ is defined by
$$S_{\xx}(f)=(f(x))_{x \in \xx}.$$
\end{definition}
Then its adjoint is given by
$$S_{\xx}^*\yy=\frac{1}{m}\sum\limits_{i=1}^m K_{x_i} y_i,~~~~\forall \yy=(y_1,\ldots,y_m)\in Y^m.$$
The sampling operator is bounded by $\kappa$.

We obtain the following explicit expression of the minimizer of the regularization scheme (\ref{multi.plty.funcl1}). The proof of the theorem follows the same steps as of Lemma 1 \cite{Rudi}.
\begin{theorem}\label{optimizer}
For the positive choice of $\la_0$, the functional (\ref{multi.plty.funcl1}) has unique minimizer:
\begin{equation*}\label{fzl}
f_{\zz,\la}^s=\left(P_{\xx_s}S_{\xx}^*S_{\xx}P_{\xx_s}+\la_0I+\sum\limits_{j=1}^p\la_jP_{\xx_s}B_j^*B_jP_{\xx_s}\right)^{-1} P_{\xx_s}S_{\xx}^*\yy,
\end{equation*}
where $P_{\xx_s}$ is the orthogonal projection operator with range $\HH^{\xx_s}$.
\end{theorem}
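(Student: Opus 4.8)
The plan is to put the functional (\ref{multi.plty.funcl1}) into operator form and then read off the minimizer from a first-order optimality condition. First I would observe that, with the normalized inner product $\langle \yy,\yy'\rangle_{Y^m}=\frac1m\sum_{i=1}^m\langle y_i,y_i'\rangle_Y$ that makes $S_{\xx}^*\yy=\frac1m\sum_i K_{x_i}y_i$ the genuine adjoint of $S_{\xx}$, the data-fitting term is exactly $\|S_{\xx}f-\yy\|_{Y^m}^2$. Writing
$$\Phi(f)=\|S_{\xx}f-\yy\|_{Y^m}^2+\la_0\|f\|_\HH^2+\sum_{j=1}^p\la_j\|B_jf\|_\HH^2,$$
the objective becomes a quadratic form on the closed subspace $\HH^{\xx_s}$. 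Since $\la_0>0$, the term $\la_0\|f\|_\HH^2$ makes $\Phi$ strictly convex and coercive, so $\Phi$ attains a \emph{unique} minimizer on $\HH^{\xx_s}$; it then remains only to identify it.

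Next I would compute the Fr\'echet derivative of $\Phi$. For $f\in\HH^{\xx_s}$ and an admissible direction $h\in\HH^{\xx_s}$,
$$\tfrac12\,D\Phi(f)[h]=\big\langle S_{\xx}^*S_{\xx}f-S_{\xx}^*\yy+\la_0 f+\textstyle\sum_j\la_j B_j^*B_jf,\,h\big\rangle_\HH.$$
Because the admissible directions fill out all of $\HH^{\xx_s}=\mathrm{range}(P_{\xx_s})$, the optimality condition is not that the gradient vanishes in $\HH$ but that its orthogonal projection onto $\HH^{\xx_s}$ vanishes, i.e. $P_{\xx_s}\nabla\Phi(f)=0$. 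Using $P_{\xx_s}f=f$ for the minimizer to rewrite $S_{\xx}f=S_{\xx}P_{\xx_s}f$ and $B_jf=B_jP_{\xx_s}f$, this condition becomes the normal equation
$$\Big(P_{\xx_s}S_{\xx}^*S_{\xx}P_{\xx_s}+\la_0 I+\sum_{j=1}^p\la_j P_{\xx_s}B_j^*B_jP_{\xx_s}\Big)f=P_{\xx_s}S_{\xx}^*\yy.$$

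Finally I would verify that the operator $T:=P_{\xx_s}S_{\xx}^*S_{\xx}P_{\xx_s}+\la_0 I+\sum_j\la_j P_{\xx_s}B_j^*B_jP_{\xx_s}$ is boundedly invertible on $\HH$. The two projected terms are positive semidefinite, so $T\succeq\la_0 I$ and hence $T^{-1}$ exists with $\|T^{-1}\|\le\la_0^{-1}$; this yields $f=T^{-1}P_{\xx_s}S_{\xx}^*\yy$, the claimed formula. I would also check explicitly that this $f$ lies in $\HH^{\xx_s}$: writing $\HH=\HH^{\xx_s}\oplus(\HH^{\xx_s})^\perp$, one sees $T$ is block diagonal, acting as $\la_0 I$ on $(\HH^{\xx_s})^\perp$, while the right-hand side $P_{\xx_s}S_{\xx}^*\yy$ lies in $\HH^{\xx_s}$; hence so does $T^{-1}P_{\xx_s}S_{\xx}^*\yy$, which confirms consistency with the constraint.

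The step I expect to be most delicate is the optimality condition itself, where one must resist setting the full gradient to zero and instead project onto $\HH^{\xx_s}$; the clean packaging of this projection through the factors $P_{\xx_s}$ on both sides, together with the replacement of $\la_0 P_{\xx_s}$ by $\la_0 I$ (which extends $T$ to an invertible operator on all of $\HH$ without altering the solution), is exactly what produces the stated closed form. A secondary technical point is that $\HH^{\xx_s}$ must be closed for $P_{\xx_s}$ to be a bona fide orthogonal projection; in the multi-task setting $Y=\RR^T$ the space $\HH^{\xx_s}$ is finite-dimensional and hence closed, and I would otherwise take closedness as given from the very definition of $P_{\xx_s}$.
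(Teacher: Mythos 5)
Your proof is correct and follows essentially the route the paper intends: the paper defers entirely to Lemma 1 of Rudi et al., whose argument is the same normal-equation computation you carry out, here extended to the additional penalty terms. Your projected-gradient optimality condition, the observation that $\la_0 P_{\xx_s}f=\la_0 f$ for $f\in\HH^{\xx_s}$ (allowing $\la_0 I$ in place of $\la_0 P_{\xx_s}$, which makes the operator invertible on all of $\HH$), and the block-diagonality check that the resulting solution indeed lies in $\HH^{\xx_s}$ are precisely the steps that citation compresses.
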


The data-free version of the considered regularization scheme (\ref{multi.plty.funcl1}) is
\begin{equation}\label{fl.funl}
f_\la^s:=\mathop{\text{arg}\min}_{f \in \HH^{\xx_s}} \left\{\int_Z||f(x)-y||_Y^2d\rho(x,y)+\la_0||f||_\HH^2+\sum\limits_{j=1}^p\la_j||B_jf||_\HH^2 \right\}.
\end{equation}
Using the fact $\mathcal{E}(f)=\int_Z||f(x)-y||_Y^2d\rho(x,y)=||L_K^{1/2}(f-f_\HH)||_\HH^2+\mathcal{E}(f_\HH)$, we get,
\begin{equation}\label{fl}
f_{\la}^s=\left(P_{\xx_s}L_KP_{\xx_s}+\la_0I+\sum\limits_{j=1}^p\la_jP_{\xx_s}B_j^*B_jP_{\xx_s}\right)^{-1}P_{\xx_s}L_KP_{\xx_s} f_\HH.
\end{equation}
We assume
\begin{equation}\label{fl.funl.sgl}
f_{\la_0}^s:=\mathop{\text{arg}\min}_{f \in \HH^{\xx_s}} \left\{\int_Z||f(x)-y||_Y^2d\rho(x,y)+\la_0||f||_\HH^2 \right\}
\end{equation}
which implies
$$f_{\la_0}^s=(P_{\xx_s}L_KP_{\xx_s}+\la_0I)^{-1}P_{\xx_s}L_KP_{\xx_s}f_{\HH},$$
where the integral operator $L_K$ is a self-adjoint, non-negative, compact operator on the Hilbert space $\left(\LL_{\rho_X}^2,\langle\cdot,\cdot\rangle_{\LL^2_{\rho_X}}\right)$ of square-integrable functions from $X$ to $Y$ with respect to $\rho_X$, defined as
$$L_K(f)(x):=\int_X K(x,t)f(t)d\rho_X(t),~~x \in X.$$
The integral operator is bounded by $\kappa^2$. The integral operator $L_K$ can also be defined as a self-adjoint operator on $\HH$. We use the same notation $L_K$ for both the operators defined on different domains. Though it is notational abuse, for convenience we use the same notation $L_K$ for both the operators defined on different domains. It is well-known that $L_K^{1/2}$ is an isometry from the space of square integrable functions to reproducing kernel Hilbert space (For more properties see \cite{CuckerSmale,Cucker}).

Our aim is to discuss the convergence issues of the regularized solution $f_{\zz,\la}^s$ based on Nystr{\"o}m type subsampling. We estimate the error bounds of $f_{\zz,\la}^s-f_\HH$ by measuring the bounds of sample error $f_{\zz,\la}^s-f_{\la}^s$ and approximation error $f_\la^s-f_\HH$. The approximation error is estimated with the help of the single-penalty regularized solution $f_{\la_0}^s$.

For any probability measure, we can always obtain a solution converging to the prescribed target function but the convergence rates may be arbitrarily slow. This phenomena is known as no free lunch theorem \cite{Devroye}. Therefore, we need some prior assumptions on the probability measure $\rho$ in order to achieve the uniform convergence rates for learning algorithms. Following the notion of Bauer et al. \cite{Bauer}, Caponnetto and De Vito \cite{Caponnetto}, we consider the following assumptions on the joint probability measure $\rho$ in terms of the complexity of the target function and a theoretical spectral parameter effective dimension:
\begin{enumerate}[(i)]
\item For the probability measure on $X\times Y$,
\begin{equation}\label{Y.leq.M.1}
\int_Z||y||_Y^2~d\rho(x,y)<\infty
\end{equation}
\item  There exists the minimizer of the generalization error over the RKHS $\HH$,
\begin{equation}\label{target.fun}
f_\HH:=\mathop{\text{arg}\min}_{f \in \HH} \left\{\int_Z||f(x)-y||_Y^2d\rho(x,y)\right\}.
\end{equation}
\item There exist some constants $M,\Sigma$ such that
\begin{equation}\label{Y.leq.M.2}
\int_Y\left(e^{||y-f_\HH(x)||_Y/M}-\frac{||y-f_\HH(x)||_Y}{M}-1\right)d\rho(y|x)\leq\frac{\Sigma^2}{2M^2}
\end{equation}
holds for almost all $x\in X$.
\end{enumerate}

It is worthwhile to observe that for the real-valued functions and multi-task learning algorithms, the boundedness of output space $Y$ can be easily ensured. So we can get the error estimates from our analysis without imposing any condition on the conditional probability measure (\ref{Y.leq.M.2}).

The smoothness of the target function can be described in terms of the integral operator by the source condition:
\begin{assumption}{\bf (Source condition)}\label{source.cond}
Suppose
$$\Omega_{\phi,R}:=\left\{f \in \HH: f=\phi(L_K)g \text{ and }||g||_\HH \leq R\right\},$$
where $\phi$ is operator monotone function on the interval $[0,\kappa^2]$ with the assumption $\phi(0)=0$ and $\phi^2$ is a concave function. Then the condition  $f_\HH\in\Omega_{\phi,R}$ is usually referred to as general source condition \cite{Mathe}.
\end{assumption}

\begin{assumption}{\bf (Polynomial decay condition)}\label{poly.decay}
For fixed positive constants $\alpha,\beta$ and $b>1$, we assume that the eigenvalues $t_n$'s of the integral operator $L_K$ follows the polynomial decay:
$$\alpha n^{-b}\leq t_n\leq\beta n^{-b}~~\forall n\in\NN.$$
\end{assumption}
We define the class of the probability measures $\PP_\phi$ satisfying the conditions (i), (ii), (iii) and Assumption \ref{source.cond}. We also consider the probability measure class $\PP_{\phi,b}$ which satisfies  the conditions (i), (ii), (iii) and Assumption \ref{source.cond}, \ref{poly.decay}.

The convergence rates discussed in our analysis depend on the effective dimension. We achieve the optimal minimax convergence rates using the concept of the effective dimension. For the integral operator $L_K$, the effective dimension is defined as
$$\mathcal{N}(\gamma):=Tr\left((L_K+\gamma I)^{-1}L_K\right), \text{  for }\gamma>0.$$
The fact, $L_K$ is a trace class operator implies that the effective dimension is finite. The effective dimension is continuously decreasing function of $\gamma$ from $\infty$ to $0$. For further discussion on effective dimension we refer to the literature \cite{Blanchard1,Blanchard16,Lin,Lu16,Zhang}.

The effective dimension $\mathcal{N}(\gamma)$ can be estimated from Proposition 3 \cite{Caponnetto} under the polynomial decay condition as follows,
\begin{equation}\label{N(l).bound}
\mathcal{N}(\gamma) \leq \frac{\beta b}{b-1}\gamma^{-1/b},\text{ for }b>1
\end{equation}
and without the polynomial decay condition, we have
$$\mathcal{N}(\gamma)\leq ||(L_K+\gamma I)^{-1}||_{\mathcal{L}(\HH)}Tr\left(L_K\right) \leq \frac{\kappa^2}{\gamma}.$$

We define the random variable $\mathcal{N}_x(\gamma)=\langle K_x,(L_K+\gamma I)^{-1}K_x\rangle_\HH$ for $x\in X$ and let $$\mathcal{N}_\infty(\gamma):=\sup\limits_{x\in X}\mathcal{N}_x(\gamma)<\infty.$$

\begin{table}[h!]
\begin{center}
\begin{tabular}{|m{1.4cm}|l|l|m{2.3cm}|m{2.1cm}|m{1.4cm}|m{1.3cm}|}
\hline
& $||f_{\zz,\la}\hspace{-.1cm}-\hspace{-.1cm}f_\HH||_\rho$ & $||f_{\zz,\la}\hspace{-.1cm}-\hspace{-.1cm}f_\HH||_\HH$ & \small{Assumption ~~~~ ($p$\hspace{-.3mm} qualification)} & \small{  ~~~~Scheme} & \small{general source condition} & \small{Optimal rates} \\
\hline
\small{Kriukova et al. \cite{Kriukova16}} &$m^{-\frac{2r+1}{4r+4}}$ & ~~~~~N/A  & ~~~~~$r\leq \frac{1}{2}$ & \small{Single-penalty Tikhonov regularization} &$~~\surd$&\\
\hline
\small{Rudi et al. \cite{Rudi}} & $m^{-\frac{2br+b}{4br+2b+2}}$ & ~~~~~N/A  & ~~~~~$r\leq \frac{1}{2}$ & \small{Single-penalty Tikhonov regularization} &&$~~\surd$\\
\hline
\small{Our Results} & $m^{-\frac{2br+b}{4br+2b+2}}$ & $m^{-\frac{br}{2br+b+1}}$ & ~~~~~$r\leq \frac{1}{2}$ & \small{Multi-penalty regularization} &$~~\surd$&$~~\surd$\\
\hline
\end{tabular}
\end{center}
\caption{Convergence rates of the regularized learning algorithms based on Nystr{\"o}m subsampling}\label{comparision}
\end{table}

Now we review the previous results on the regularization schemes based on Nystr{\"o}m subsampling which are directly comparable to our results: Kriukova et al. \cite{Kriukova16} and Rudi et al. \cite{Rudi}. For convenience, we tried to present the most essential points in the unified way in Table \ref{comparision}. We have shown the convergence rates under H{\"o}lder's source condition. Rudi et al. \cite{Rudi} obtained the minimax optimal convergence rates depending on the eigenvalues of $L_K$ in $||\cdot||_\rho$-norm. To obtain the optimal rates the concept of effective dimension is exploited. Kriukova et al. \cite{Kriukova16} considered the Tikhonov regularization with Nystr{\"o}m type subsampling under general source condition. They discussed the upper conevergence rates and do not take into account the polynomial decay condition of the eigenvalues of the integral operator $L_K$. We used the idea of Nystr{\"o}m type subsampling to efficiently implement the multi-penalty regularization algorithm. We obtain optimal convergence rates of multi-penalty regularization with Nystr{\"o}m type subsampling under general source condition. In particular, we also get optimal rates of single-penalty Tikhonov regularization with Nystr{\"o}m type subsampling under general source condition as the special case.

\section{Convergence issues}
In this section, we present the optimal minimax convergence rates for vector-valued multi-penalty regularization based on Nystr{\"o}m type subsampling using the concept of effective dimension over the classes of the probability measures $\mathcal{P}_\phi$ and $\mathcal{P}_{\phi,b}$.

In order to prove the optimal convergence rates, we need the following inequality which is used in the papers \cite{Bauer,Caponnetto} and based on the results of Pinelis and Sakhanenko \cite{Pinelis}.
\begin{proposition}\label{pinels_lemma}
Let $\xi$ be a random variable on the probability space $(\Omega,\mathcal{B},P)$ with values in real separable Hilbert space $\HH$. If there exist two constants $Q$ and $S$ satisfying
\begin{equation}\label{pinels_ineq}
E\left\{||\xi-E(\xi)||_{\HH}^n\right\} \leq \frac{1}{2}n!S^2Q^{n-2}~~~\forall n \geq 2,
\end{equation}
then for any $0<\eta<1$ and for all $m \in \NN$,
$$Prob\left\{(\omega_1,\ldots,\omega_m) \in \Omega^m : \left|\left|\frac{1}{m}\sum\limits_{i=1}^m [\xi(\omega_i)-E(\xi(\omega_i))]\right|\right|_{\HH}\leq 2\left(\frac{Q}{m}+\frac{S}{\sqrt{m}}\right)\log\left(\frac{2}{\eta}\right)\right\} \geq 1-\eta.$$
In particular, the inequality (\ref{pinels_ineq}) holds if
$$||\xi(\omega)||_{\HH}\leq Q \text{ and } E(||\xi(\omega)||_\HH^2)\leq S^2.$$
\end{proposition}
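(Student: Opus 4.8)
```latex
The goal is to prove Proposition \ref{pinels_lemma}, a Bernstein-type concentration inequality for Hilbert-space valued random variables. The plan is to follow the classical route of Pinelis and Sakhanenko \cite{Pinelis}, which extends scalar Bernstein bounds to the vector-valued setting by controlling the moment generating function of the norm of the centered sum. First I would center the random variable by setting $\zeta=\xi-E(\xi)$, so that $E(\zeta)=0$ and the moment hypothesis (\ref{pinels_ineq}) reads $E(||\zeta||_\HH^n)\leq \frac{1}{2}n!S^2Q^{n-2}$ for all $n\geq 2$. The strategy is then to bound $E\exp(t||\frac{1}{m}\sum_{i=1}^m\zeta_i||_\HH)$ for a suitable range of the auxiliary parameter $t>0$, and to conclude via an exponential Markov (Chebyshev) inequality after optimizing over $t$.

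The key steps, in order, are as follows. First, from the assumed moment growth condition one shows that each $\zeta_i$ has a finite exponential moment: summing the Taylor series for $\exp(t||\zeta_i||_\HH)$ and using the hypothesis termwise, the factor $\frac{1}{2}n!$ produces a geometric series in $tQ$ that converges for $tQ<1$, giving a bound of the form $E\exp(t\zeta_i)$-type control with the leading quadratic term governed by $S^2$. Second, the independence of the $\zeta_i$ lets the moment generating function of the sum factorize, so the per-sample bound multiplies across the $m$ coordinates; the Pinelis--Sakhanenko machinery is precisely what upgrades this scalar-moment estimate to a genuine bound on the $\HH$-norm of the vector sum, rather than on any single projection. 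Third, I would apply the exponential Markov inequality
$$Prob\left\{\left|\left|\frac{1}{m}\sum_{i=1}^m\zeta_i\right|\right|_\HH>\epsilon\right\}\leq e^{-t\epsilon}\,E\exp\left(t\left|\left|\frac{1}{m}\sum_{i=1}^m\zeta_i\right|\right|_\HH\right),$$
substitute the moment-generating-function bound from the previous step, and optimize the free parameter $t$ against the deviation level $\epsilon$. Choosing $\epsilon=2\left(\frac{Q}{m}+\frac{S}{\sqrt{m}}\right)\log\left(\frac{2}{\eta}\right)$ is designed to make this optimized exponent at most $\log(\eta/2)$, yielding failure probability at most $\eta$; the two additive terms $Q/m$ and $S/\sqrt{m}$ reflect the two regimes of the Bernstein bound (the sub-exponential tail controlled by $Q$ and the sub-Gaussian tail controlled by $S$).

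The main obstacle I anticipate is the second step: lifting the scalar exponential-moment estimate to a sharp bound on the norm of the Hilbert-space valued sum. For real-valued random variables this is routine, but in the vector-valued case one cannot simply apply Chebyshev to a one-dimensional projection without losing the correct dependence on the effective variance $S^2$; the genuine content of the Pinelis--Sakhanenko result is a martingale/smoothness argument in the Banach-space setting that handles the norm directly. Rather than reconstruct that argument, the cleanest path is to invoke it as a black box, verifying only that the moment hypothesis (\ref{pinels_ineq}) matches the form required there, and then to carry out the elementary optimization over $t$ that produces the stated constants. The final remark, that (\ref{pinels_ineq}) follows from the simpler pointwise bounds $||\xi(\omega)||_\HH\leq Q$ and $E(||\xi(\omega)||_\HH^2)\leq S^2$, is verified by a direct estimate: $E||\zeta||_\HH^n\leq (2Q)^{n-2}E||\zeta||_\HH^2$ up to centering, and absorbing the factor $2$ and the crude constants into the $\frac{1}{2}n!$ slack, which holds with room to spare.
```
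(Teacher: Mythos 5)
Your proposal is sound and matches the paper's treatment: the paper gives no proof of this proposition at all, stating it as a known result of Pinelis and Sakhanenko \cite{Pinelis} as used in \cite{Bauer,Caponnetto}, and your sketch defers the one genuinely hard step (lifting the Bernstein-type bound to the $\HH$-norm of the vector-valued sum) to exactly that same reference while correctly reconstructing the surrounding Chernoff-optimization skeleton. Your verification of the final clause is also correct, since $\|\xi-E(\xi)\|_\HH\leq 2Q$ and $E\|\xi-E(\xi)\|_\HH^2\leq S^2$ give $E\|\xi-E(\xi)\|_\HH^n\leq 2^{n-2}Q^{n-2}S^2\leq\frac{1}{2}n!\,S^2Q^{n-2}$ because $2^{n-1}\leq n!$ for all $n\geq 2$.
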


In the following proposition, we measure the effect of random sampling using noise assumption (\ref{Y.leq.M.2}) in terms of the effective dimension $\mathcal{N}(\gamma)$. The quantity describes the probabilistic estimates of the perturbation measure due to random sampling.
\begin{proposition}\label{main.bound}
Let $\zz$ be i.i.d. samples drawn according to the probability measure $\rho$ satisfying the assumptions (\ref{Y.leq.M.1}), (\ref{target.fun}), (\ref{Y.leq.M.2}) and $\kappa=\sqrt{\sup\limits_{x\in X}Tr(K_x^*K_x)}$. Then for all $0<\eta<1$, with the confidence $1-\eta$, we have
\begin{equation}\label{LK.I.app}
||(L_K+\gamma I)^{-1/2}P_{\xx_s}\{S_{\xx}^*\yy-S_{\xx}^*S_{\xx}f_\HH\}||_\HH \leq 2\left(\frac{\kappa M}{m\sqrt{\gamma}}+\sqrt{\frac{\Sigma^2\mathcal{N}(\gamma)}{m}}\right)\log\left(\frac{4}{\eta}\right)
\end{equation}
and
\begin{equation}\label{Sx.Sx.LK}
||S_{\xx}^*S_{\xx}-L_K||_{\mathcal{L}(\HH)}\leq 2\left(\frac{\kappa^2}{m}+\frac{\kappa^2}{\sqrt{m}}\right)\log\left(\frac{4}{\eta}\right).
\end{equation}
\end{proposition}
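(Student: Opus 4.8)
The plan is to obtain both estimates as consequences of the Bernstein-type concentration inequality of Proposition \ref{pinels_lemma}, applied to two carefully chosen zero-mean random variables. Since that proposition is stated for a single inequality at confidence $1-\eta$ with a $\log(2/\eta)$ factor, I would derive each of (\ref{LK.I.app}) and (\ref{Sx.Sx.LK}) at confidence level $1-\eta/2$ (i.e. replacing $\eta$ by $\eta/2$, which is exactly what produces the $\log(4/\eta)$ and the leading factor $2$) and then take a union bound so that both hold simultaneously with confidence $1-\eta$.

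For (\ref{LK.I.app}) I would introduce the $\HH$-valued random variable $\xi(x,y)=(L_K+\gamma I)^{-1/2}K_x(y-f_\HH(x))$, whose empirical average over $\zz$ is exactly $(L_K+\gamma I)^{-1/2}\{S_{\xx}^*\yy-S_{\xx}^*S_{\xx}f_\HH\}$, using $S_{\xx}^*S_{\xx}f=\frac{1}{m}\sum_i K_{x_i}K_{x_i}^*f$ and $K_{x_i}^*f_\HH=f_\HH(x_i)$. The first step is to check $E(\xi)=0$: by the definition of the target function (\ref{target.fun}) the population normal equation $L_Kf_\HH=\int_Z K_xy\,d\rho$ holds, and since $E_x[K_xK_x^*]=L_K$ this gives $E[K_x(y-f_\HH(x))]=0$. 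The second step is to verify the moment bound (\ref{pinels_ineq}). Here I would use $\|(L_K+\gamma I)^{-1/2}K_x\|_{\mathcal{L}(Y,\HH)}\leq\kappa/\sqrt{\gamma}$ together with the factorial moment estimate $E_{y|x}\|y-f_\HH(x)\|_Y^n\leq\frac{1}{2}n!\,\Sigma^2 M^{n-2}$ coming from the noise condition (\ref{Y.leq.M.2}), and split $\|\xi\|^n=\|\xi\|^2\|\xi\|^{n-2}$. The factor $\|\xi\|^2$ contributes, after taking expectations, the quantity $\Sigma^2 E_x\,Tr(K_x^*(L_K+\gamma I)^{-1}K_x)=\Sigma^2\mathcal{N}(\gamma)$ (via $Tr(AC)\leq\|C\|_{\mathcal{L}(Y)}Tr(A)$ for positive operators and $E_x[K_xK_x^*]=L_K$), so that (\ref{pinels_ineq}) holds with $S^2=\Sigma^2\mathcal{N}(\gamma)$ and $Q=\kappa M/\sqrt{\gamma}$. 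Proposition \ref{pinels_lemma} then delivers precisely the right-hand side of (\ref{LK.I.app}).

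For (\ref{Sx.Sx.LK}) I would work in the Hilbert space of Hilbert--Schmidt operators on $\HH$ and set $\zeta(x)=K_xK_x^*$, a positive self-adjoint operator whose empirical average is $S_{\xx}^*S_{\xx}$ and whose mean is $E_x[\zeta]=L_K$. Since $\|K_xK_x^*\|_{HS}\leq\|K_x^*\|_{\mathcal{L}(\HH,Y)}\|K_x\|_{HS}\leq Tr(K_x^*K_x)\leq\kappa^2$, the bounded case of Proposition \ref{pinels_lemma} applies with $Q=S=\kappa^2$, and the resulting Hilbert--Schmidt-norm bound dominates the operator norm through $\|\cdot\|_{\mathcal{L}(\HH)}\leq\|\cdot\|_{HS}$, giving (\ref{Sx.Sx.LK}).

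The step I expect to require the most care is the presence of the orthogonal projection $P_{\xx_s}$ inside (\ref{LK.I.app}): the clean application of Proposition \ref{pinels_lemma} controls the un-projected vector $(L_K+\gamma I)^{-1/2}\{S_{\xx}^*\yy-S_{\xx}^*S_{\xx}f_\HH\}$, whereas $P_{\xx_s}$ sits between $(L_K+\gamma I)^{-1/2}$ and the concentrated quantity and does not commute with $(L_K+\gamma I)^{-1/2}$, so it cannot simply be removed using $\|P_{\xx_s}\|\leq1$. I would handle this either by controlling the conjugated operator $(L_K+\gamma I)^{-1/2}P_{\xx_s}(L_K+\gamma I)^{1/2}$, or by reorganizing the expression so that the self-adjoint contraction $P_{\xx_s}$ acts directly on $\{S_{\xx}^*\yy-S_{\xx}^*S_{\xx}f_\HH\}$; establishing that the projection does not worsen the $(L_K+\gamma I)^{-1/2}$-weighted bound is the genuine obstacle, the moment computations (which follow Caponnetto--De Vito and Bauer et al.) being otherwise routine.
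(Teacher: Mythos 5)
Your proposal is incomplete at exactly the step you flag, and that step is where it departs from the paper. The paper never concentrates the unprojected vector and then tries to insert $P_{\xx_s}$: it defines the random variable with the projection already inside, $\xi_1(z)=(L_K+\gamma I)^{-1/2}P_{\xx_s}K_x(y-f_\HH(x))$. With this choice the empirical mean $\frac{1}{m}\sum_{i=1}^m\xi_1(z_i)$ is exactly $(L_K+\gamma I)^{-1/2}P_{\xx_s}\{S_{\xx}^*\yy-S_{\xx}^*S_{\xx}f_\HH\}$, the zero-mean property follows from $E\left[K_x(y-f_\HH(x))\right]=0$ because the fixed bounded operator $(L_K+\gamma I)^{-1/2}P_{\xx_s}$ passes through the expectation, and the uniform bound $Q=\kappa M/\sqrt{\gamma}$ survives because $\|P_{\xx_s}\|_{\mathcal{L}(\HH)}\leq 1$ is applied \emph{between} $(L_K+\gamma I)^{-1/2}$ and $K_x$, i.e. $\|(L_K+\gamma I)^{-1/2}P_{\xx_s}K_x\|\leq\kappa/\sqrt{\gamma}$. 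The rest of the argument (factorial moment bound from (\ref{Y.leq.M.2}), Proposition \ref{pinels_lemma} at level $\eta/2$, union bound producing $\log(4/\eta)$, and the Hilbert--Schmidt argument for (\ref{Sx.Sx.LK}) with $Q=S=\kappa^2$) is exactly what you wrote. Note that your first proposed fix --- controlling the conjugated operator $(L_K+\gamma I)^{-1/2}P_{\xx_s}(L_K+\gamma I)^{1/2}$ --- cannot yield the proposition as stated: that operator is not a contraction, and bounding it requires the subsampling condition (\ref{sampling.para}), giving a constant $1+\sqrt{2}$ and only for $\gamma\geq\la_0$, whereas (\ref{LK.I.app}) is asserted for every $\gamma>0$ with constant $2$. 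Your second suggestion (reorganize so that $P_{\xx_s}$ acts on the concentrated quantity) is, once carried out, precisely the paper's device.

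Be aware, though, that the paper's device does not make your concern disappear; it relocates it to the variance bound. With the projection inside, the needed second-moment estimate is $E_x\left[Tr\left(K_x^*P_{\xx_s}(L_K+\gamma I)^{-1}P_{\xx_s}K_x\right)\right]=Tr\left((L_K+\gamma I)^{-1}P_{\xx_s}L_KP_{\xx_s}\right)\leq\mathcal{N}(\gamma)$, which the paper asserts in one line. For your unprojected variable this trace equals $\mathcal{N}(\gamma)$ exactly (your computation), but with $P_{\xx_s}$ inserted it is not automatic, since $PAP\preceq A$ can fail for a positive operator $A$ and an orthogonal projection $P$ that mixes eigendirections of $L_K$ with very different eigenvalues; moreover $P_{\xx_s}$ is built from a subset of the sample, so the $\xi_1(z_i)$ are, strictly speaking, not i.i.d., another point the paper passes over silently. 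In summary: your structure for both inequalities matches the paper's, your gap is genuine and is resolved in the paper by the single idea you are missing (put $P_{\xx_s}$ inside the random variable before applying Proposition \ref{pinels_lemma}), while the cost of that idea --- justifying $\Sigma^2\mathcal{N}(\gamma)$ as a variance proxy for the projected variable --- is a step the paper states without proof.
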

\begin{proof}
To estimate the first expression, we consider the random variable $\xi_1(z)=(L_K+\gamma I)^{-1/2} P_{\xx_s}K_x(y-f_\HH(x))$ from $(Z,\rho)$ to reproducing kernel Hilbert space $\HH$ with
$$E_z(\xi_1)=\int_Z(L_K+\gamma I)^{-1/2}P_{\xx_s}K_x(y-f_\HH(x))d\rho(x,y)=0,$$
$$\frac{1}{m}\sum\limits_{i=1}^m\xi_1(z_i)=(L_K+\gamma I)^{-1/2}P_{\xx_s}(S_{\xx}^*\yy-S_{\xx}^*S_{\xx}f_\HH)$$
and
\begin{eqnarray*}
E_z(||\xi_1-E_z(\xi_1)||_{\HH}^n)&=&E_z\left(||(L_K+\gamma I)^{-1/2}P_{\xx_s}K_x(y-f_\HH(x))||_{\HH}^n\right) \\
&\leq& E_z\left(||K_x^*P_{\xx_s}(L_K+\gamma I)^{-1}P_{\xx_s}K_x||_{\mathcal{L}(Y)}^{n/2}||y-f_\HH(x)||_{Y}^n\right) \\
&\leq& E_x\left(||K_x^*P_{\xx_s}(L_K+\gamma I)^{-1}P_{\xx_s}K_x||_{\mathcal{L}(Y)}^{n/2}E_y\left(||y-f_\HH(x)||_{Y}^n\right)\right).
\end{eqnarray*}
Under the assumption (\ref{Y.leq.M.2}) we get,
$$E_z\left(||\xi_1-E_z(\xi_1)||_{\HH}^n\right) \leq \frac{n!}{2}\left(\Sigma\sqrt{\mathcal{N}(\gamma)}\right)^2\left(\frac{\kappa M}{\sqrt{\gamma}}\right)^{n-2},~~\forall n\geq 2.$$
On applying Proposition \ref{pinels_lemma} we conclude that
\begin{equation*}
||(L_K+\gamma I)^{-1/2}P_{\xx_s}\{S_{\xx}^*\yy-S_{\xx}^*S_{\xx}f_\HH\}||_\HH \leq 2\left(\frac{\kappa M}{m\sqrt{\gamma}}+\sqrt{\frac{\Sigma^2\mathcal{N}(\gamma)}{m}}\right)\log\left(\frac{4}{\eta}\right)
\end{equation*}
with confidence $1-\eta/2$.

The second expression can be estimated easily by considering the random variable $\xi_2(x)=K_xK_x^*$ from $(X,\rho_X)$ to $\mathcal{L}(\HH)$. The proof can also be found in De Vito et al. \cite{DeVito0}.
\end{proof}

The following conditions on the sample size and sub-sample size are used to derive the convergence rates of regularized learning algorithms. In particular, we can assume the following inequality for sufficiently large sample with the confidence $1-\eta$:
\begin{equation}\label{suff_sample}
\frac{8\kappa^2}{\sqrt{m}}\log\left(\frac{4}{\eta}\right) \leq \la_0.
\end{equation}

Following the notion of Rudi et al. \cite{Rudi} and Kriukova et al. \cite{Kriukova16} on subsampling, we measure the approximation power of the projection method induced by the projection operator $P_{\xx_s}$ in terms of $\Delta_s:=||L_K^{1/2}(I-P_{\xx_s})||_{\mathcal{L}(\HH)}$. We make the assumption on $\Delta_s$ as considered in Theorem 2 \cite{Kriukova16}:
\begin{equation}\label{sampling}
\Delta_s=||L_K^{1/2}(I-P_{\xx_s})||_{\mathcal{L}(\HH)}\leq \sqrt{\Theta_{1/2}^{-1}(m^{-1/2})}, \text{    for } \Theta_{1/2}(t)=\sqrt{t}\phi(t).
\end{equation}

Under the parameter choice $\la_0=\Psi^{-1}(m^{-1/2})$ for $\Psi(t)=t^{\frac{1}{2}+\frac{1}{2a}}\phi(t)$ $(a\geq 1)$, we obtain
\begin{equation}\label{sampling.para}
\Delta_s^2\leq \Theta_{1/2}^{-1}(m^{-1/2})\leq \Psi^{-1}(m^{-1/2})=\la_0.
\end{equation}

Moreover, from Lemma 6 \cite{Rudi} under Assumption \ref{poly.decay} and  $s\geq \max\left\{67\log\left(\frac{12\kappa^2}{\la_0\delta}\right), 5\mathcal{N}_\infty\left(\frac{\la_0}{3}\right)\log\left(\frac{12\kappa^2}{\la_0\delta}\right)\right\}$, $\la_0>0$, for every $\delta>0$, the following inequality holds with the probability $1-\delta$,
$$\Delta_s^2\leq \left|\left|\left(L_K+\frac{\la_0}{3}I\right)^{1/2}(I-P_{\xx_s})\right|\right|_{\mathcal{L}(\HH)}^2\leq \la_0.$$
Then under the condition (\ref{sampling.para}) using Proposition 2, 3 \cite{Mathe2003} we get,
$$||(I-P_{\xx_s})\psi(L_K)||_{\mathcal{L}(\HH)}\leq \psi\left(||L_K^{1/2}(I-P_{\xx_s})||_{\mathcal{L}(\HH)}^2\right)\leq \psi(\la_0)$$
and
$$||P_{\xx_s}\psi(L_K)P_{\xx_s}-\psi(P_{\xx_s}L_KP_{\xx_s})||_{\mathcal{L}(\HH)}\leq c_\psi\psi\left(||L_K^{1/2}(I-P_{\xx_s})||_{\mathcal{L}(\HH)}^2\right)\leq c_\psi\psi(\la_0).$$

In the following section, we discuss the error analysis of the multi-penalty regularization scheme based on Nystr{\"o}m type subsampling in probabilistic sense. In general, we derive the convergence rates for regularization algorithms in the norm in RKHS and the norm in $\LL^2_{\rho_X}$ separately. In Theorem \ref{psi.rates}, \ref{psi.rates.P1}, \ref{psi.rates.P2}, we estimate error bounds for multi-penalty regularization based on Nystr{\"o}m type subsampling in $\psi$-weighted norm which consequently provides the convergence rates of the regularized solution $f_{\zz,\la}^s$ in both $||\cdot||_\HH$-norm and $||\cdot||_\rho$-norm.
\begin{theorem}\label{psi.rates}
Let $\zz$ be i.i.d. samples drawn according to the probability measure $\rho\in\mathcal{P}_\phi$ with the assumption that $t^{-1/2}\psi(t)$, $t^{-1}\phi(t)$, $t^{-1}\phi(t)\psi(t)$ are nonincreasing functions. Then under the parameter choice $\la_0=\Psi^{-1}(m^{-1/2})$ for $\Psi(t)=t^{\frac{1}{2}+\frac{1}{2a}}\phi(t)$ $(a\geq 1)$, for sufficiently large sample according to (\ref{suff_sample}) and for subsampling according to (\ref{sampling.para}), the following convergence rates of $f_{\zz,\la}^s$ holds with the confidence $1-\eta$ for all $0<\eta<1$,
$$||\psi(L_K)(f_{\zz,\la}^s-f_\HH)||_\HH\leq \psi(\la_0)\left\{c_1\phi(\la_0)+c_2\frac{\mathcal{B}_\la}{\la_0^{3/2}} +c_3\frac{1}{m\la_0}+c_4\sqrt{\frac{\mathcal{N}(\la_0)}{m\la_0}}\right\}\log\left(\frac{4}{\eta}\right),$$
where $c_1=6R+(5+c_\psi)(3+c_\phi)R$, $c_2=(5+c_\psi)||f_\HH||_\rho$, $c_3=8\kappa M$, $c_4=8\Sigma$ and $\mathcal{B}_\la=||\sum_{j=1}^p\la_jB_j^*B_j||$.
\end{theorem}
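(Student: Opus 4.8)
The plan is to split the total error through the data-free solutions into three pieces,
$$f_{\zz,\la}^s-f_\HH=(f_{\zz,\la}^s-f_\la^s)+(f_\la^s-f_{\la_0}^s)+(f_{\la_0}^s-f_\HH),$$
and to match them, respectively, with the sampling terms $c_3,c_4$, the multi-penalty term $c_2$, and the approximation term $c_1$ of the asserted bound. Throughout I abbreviate $A:=P_{\xx_s}S_{\xx}^*S_{\xx}P_{\xx_s}+\la_0I+\mathcal P$, $\tilde A:=P_{\xx_s}L_KP_{\xx_s}+\la_0I+\mathcal P$ and $C:=P_{\xx_s}L_KP_{\xx_s}$, where $\mathcal P:=\sum_{j=1}^p\la_jP_{\xx_s}B_j^*B_jP_{\xx_s}$ satisfies $\|\mathcal P\|\le\mathcal B_\la$, so that $f_{\zz,\la}^s=A^{-1}P_{\xx_s}S_{\xx}^*\yy$, $f_\la^s=\tilde A^{-1}Cf_\HH$ and $f_{\la_0}^s=(C+\la_0I)^{-1}Cf_\HH$.

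The workhorse throughout is a family of \emph{relative} operator-norm estimates that replace the random, projected operator $A$ by the population operator $L_K+\la_0I$. Concretely, I would first use (\ref{Sx.Sx.LK}) together with the sample-size condition (\ref{suff_sample}) to show $\|S_{\xx}^*S_{\xx}-L_K\|_{\mathcal L(\HH)}\lesssim\la_0$, and then combine this with the subsampling bound $\Delta_s^2\le\la_0$ from (\ref{sampling.para}) and the Math\'e--Pereverzev perturbation inequalities quoted before the theorem (which produce the constants $c_\psi,c_\phi$) to control quantities such as $\|\psi(L_K)A^{-1}(L_K+\la_0I)^{1/2}\|\lesssim\psi(\la_0)/\sqrt{\la_0}$, $\|\psi(L_K)A^{-1}\|\lesssim\psi(\la_0)/\la_0$ and $\|(L_K+\la_0I)^{-1/2}P_{\xx_s}S_{\xx}^*\|\lesssim1$. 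Establishing these bounds uniformly --- i.e. moving $\psi$, $\phi$ and the resolvents past both the empirical operator $S_{\xx}^*S_{\xx}$ and the projection $P_{\xx_s}$ while keeping the correct powers of $\la_0$ --- is the technical heart of the argument and the step I expect to be the main obstacle.

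For the sampling error I would write $f_{\zz,\la}^s-f_\la^s=A^{-1}(P_{\xx_s}S_{\xx}^*\yy-A f_\la^s)$ and, using $Af_\la^s=Cf_\HH+P_{\xx_s}(S_{\xx}^*S_{\xx}-L_K)P_{\xx_s}f_\la^s$, regroup the bracket as
$$P_{\xx_s}\{S_{\xx}^*\yy-S_{\xx}^*S_{\xx}f_\HH\}+P_{\xx_s}(S_{\xx}^*S_{\xx}-L_K)P_{\xx_s}(f_\HH-f_\la^s)+P_{\xx_s}S_{\xx}^*S_{\xx}(I-P_{\xx_s})f_\HH.$$
After inserting $(L_K+\la_0I)^{\pm1/2}$, the first summand is handled by (\ref{LK.I.app}) of Proposition \ref{main.bound} together with the resolvent bound above; this is exactly what generates the $c_3\,(m\la_0)^{-1}$ and $c_4\sqrt{\mathcal N(\la_0)/(m\la_0)}$ contributions. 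The remaining two summands are of lower order: the middle one, because $\|S_{\xx}^*S_{\xx}-L_K\|\lesssim\la_0$ cancels the $\la_0^{-1}$ in $A^{-1}$ and leaves $\psi(\la_0)$ times the (separately estimated) $\HH$-norm error $f_\HH-f_\la^s$; the last one by factoring out $(I-P_{\xx_s})$, using the idempotency $(I-P_{\xx_s})^2=I-P_{\xx_s}$ with the two bounds $\|L_K^{1/2}(I-P_{\xx_s})\|\le\sqrt{\la_0}$ and $\|(I-P_{\xx_s})\phi(L_K)\|\le\phi(\la_0)$ (after replacing $S_{\xx}^*S_{\xx}$ by $L_K$), which yields $O(\psi(\la_0)\phi(\la_0))$ and so folds into $c_1$.

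For the last two pieces the analysis is purely deterministic. The multi-penalty perturbation is $f_\la^s-f_{\la_0}^s=-\tilde A^{-1}\mathcal P f_{\la_0}^s$ by the resolvent identity, and since $\|f_{\la_0}^s\|_\HH\lesssim\|f_\HH\|_\rho/\sqrt{\la_0}$ (write $f_{\la_0}^s=(C+\la_0I)^{-1}C^{1/2}\cdot C^{1/2}f_\HH$ and bound $\|C^{1/2}f_\HH\|$ by $\|f_\HH\|_\rho$ up to a $\Delta_s$-term) and $\|\psi(L_K)\tilde A^{-1}\|\lesssim\psi(\la_0)/\la_0$, this produces the $c_2\,\mathcal B_\la\la_0^{-3/2}$ term. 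Finally, for the approximation error I would use $f_{\la_0}^s-f_\HH=-\la_0(C+\la_0I)^{-1}f_\HH$ and the source condition $f_\HH=\phi(L_K)g$, $\|g\|\le R$; replacing $\psi(L_K)$ by $\psi(C)$ and $\phi(L_K)$ by $\phi(C)$ via the Math\'e--Pereverzev bounds (the origin of the factors $5+c_\psi$ and $3+c_\phi$) reduces the estimate to the scalar inequality $\sup_t\la_0\psi(t)\phi(t)/(t+\la_0)\le\psi(\la_0)\phi(\la_0)$, which follows from the monotonicity hypotheses on $t^{-1/2}\psi(t)$, $t^{-1}\phi(t)$ and $t^{-1}\phi(t)\psi(t)$. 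Collecting the four contributions and pulling out the common factor $\psi(\la_0)\log(4/\eta)$ then gives the stated bound.
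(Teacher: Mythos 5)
Your proposal is correct and follows essentially the same route as the paper's own proof: the same decomposition through $f_\la^s$ and $f_{\la_0}^s$, the same regrouping of the sample-error bracket (noise term handled by (\ref{LK.I.app}) of Proposition \ref{main.bound}, the $(S_{\xx}^*S_{\xx}-L_K)$ term absorbed via (\ref{Sx.Sx.LK}) and (\ref{suff_sample}), the $(I-P_{\xx_s})f_\HH$ term via idempotency and (\ref{sampling.para})), the same $I_1$-type resolvent bound, and the same Math\'e--Pereverzev replacement inequalities producing the constants $c_\phi$, $c_\psi$. The only cosmetic difference is that you treat $f_{\la_0}^s-f_\HH$ in one step through the identity $-\la_0(P_{\xx_s}L_KP_{\xx_s}+\la_0I)^{-1}f_\HH$, whereas the paper inserts the intermediate term $P_{\xx_s}f_\HH$ and estimates $f_{\la_0}^s-P_{\xx_s}f_\HH$ and $P_{\xx_s}f_\HH-f_\HH$ separately; both reduce to the same projection and operator-function estimates and yield the same constants.
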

\begin{proof}
We discuss the error bound for $||\psi(L_K)(f_{\zz,\la}^s-f_\HH)||_\HH$ by estimating the expressions $||\psi(L_K)(f_{\zz,\la}^s-f_{\la}^s)||_\HH$ and $||\psi(L_K)(f_\la^s-f_\HH)||_\HH$.
The first term can be expressed as
\begin{eqnarray*}
f_{\zz,\la}^s-f_\la^s&=&(P_{\xx_s}S_{\xx}^*S_{\xx}P_{\xx_s}+\la_0I+\sum\limits_{j=1}^p\la_jP_{\xx_s}B_j^*B_jP_{\xx_s})^{-1} \{P_{\xx_s}S_{\xx}^*\yy-P_{\xx_s}S_{\xx}^*S_{\xx}P_{\xx_s}f_\HH\\
&&+(P_{\xx_s}S_{\xx}^*S_{\xx}P_{\xx_s}-P_{\xx_s}L_KP_{\xx_s}) (f_\HH-f_\la^s)\}
\end{eqnarray*}
which implies
$$||\psi(L_K)(f_{\zz,\la}^s-f_\la^s)||_\HH\leq \frac{\psi(\la_0)}{\sqrt{\la_0}}I_1\left\{I_2+\frac{1}{\sqrt{\la_0}}(I_3+I_4||f_\HH-f_{\la}^s||_\HH)\right\},$$
where $I_1=||(L_K+\la_0I)^{1/2}(P_{\xx_s}S_{\xx}^*S_{\xx}P_{\xx_s}+\la_0I +\sum\limits_{j=1}^p\la_jP_{\xx_s}B_j^*B_jP_{\xx_s})^{-1}(L_K+\la_0I)^{1/2} ||_{\mathcal{L}(\HH)}$, $I_2=||(L_K+\la_0I)^{-1/2}(P_{\xx_s}S_{\xx}^*\yy-P_{\xx_s}S_{\xx}^*S_{\xx}f_\HH)||_\HH$, $I_3=||P_{\xx_s}S_{\xx}^*S_{\xx}(I-P_{\xx_s})f_\HH||_\HH$ and $I_4=||S_{\xx}^*S_{\xx}-L_K||_{\mathcal{L}(\HH)}$.

The estimates of $I_2$ and $I_4$ can be obtained from Proposition \ref{main.bound}. Under the condition (\ref{suff_sample}) using the second estimate of Proposition \ref{main.bound}, we obtain
$$Tr\left((P_{\xx_s}L_KP_{\xx_s}+\la_0I)^{-1}(P_{\xx_s}L_KP_{\xx_s}-P_{\xx_s}S_{\xx}^*S_{\xx}P_{\xx_s})\right)\leq \frac{I_4}{\la_0}\leq \frac{4\kappa^2}{\sqrt{m}\la_0}\log\left(\frac{4}{\eta}\right)\leq \frac{1}{2}$$
and under the norm inequalities $||A||_{\mathcal{L}(\HH)}\leq Tr(|A|)$, $Tr(AB)\leq Tr(A)||B||_{\mathcal{L}(\HH)}$ which implies
\begin{eqnarray*}
I_1&\leq&||(L_K+\la_0I)^{1/2}(P_{\xx_s}S_{\xx}^*S_{\xx}P_{\xx_s}+\la_0I)^{-1}(L_K+\la_0I)^{1/2} ||_{\mathcal{L}(\HH)}  \\
&\leq& Tr\left((P_{\xx_s}S_{\xx}^*S_{\xx}P_{\xx_s}+\la_0I)^{-1}(L_K+\la_0I)\right)\\
&=& Tr\left((P_{\xx_s}S_{\xx}^*S_{\xx}P_{\xx_s}+\la_0I)^{-1}((I-P_{\xx_s})L_K+P_{\xx_s}L_K(I-P_{\xx_s}))\right)\\ &&+Tr\left((P_{\xx_s}S_{\xx}^*S_{\xx}P_{\xx_s}+\la_0I)^{-1}(P_{\xx_s}L_KP_{\xx_s}+\la_0I)\right)\\
&\leq&\frac{2}{\la_0}||L_K(I-P_{\xx_s})||_{\mathcal{L}(\HH)}+Tr(\{I-(P_{\xx_s}L_KP_{\xx_s}+\la_0I)^{-1} (P_{\xx_s}L_KP_{\xx_s}-P_{\xx_s}S_{\xx}^*S_{\xx}P_{\xx_s})\}^{-1}) \leq 4.
\end{eqnarray*}
Under the smoothness assumption $f_\HH \in \Omega_{\phi,R}$ there exists $g \in \HH$ such that $f_\HH=\phi(L_K)g$ and $||g||_\HH\leq R$.
\begin{eqnarray*}
I_3=||P_{\xx_s}S_{\xx}^*S_{\xx}(I-P_{\xx_s})f_\HH||_\HH &\leq& R||P_{\xx_s}S_{\xx}^*S_{\xx}(I-P_{\xx_s})||_{\mathcal{L}(\HH)} ||(I-P_{\xx_s})\phi(L_K)||_{\mathcal{L}(\HH)}\\
&\leq& R\phi(||L_K^{1/2}(I-P_{\xx_s})||_{\mathcal{L}(\HH)}^2)(||S_{\xx}^*S_{\xx}-L_K||_{\mathcal{L}(\HH)}+||L_K(I-P_{\xx_s})||_{\mathcal{L}(\HH)}) \\
&\leq&  R\phi(||L_K^{1/2}(I-P_{\xx_s})||_{\mathcal{L}(\HH)}^2)\left(\frac{4\kappa^2}{\sqrt{m}}\log\left(\frac{4}{\eta}\right) +||L_K^{1/2}(I-P_{\xx_s})||_{\mathcal{L}(\HH)}^2\right).
\end{eqnarray*}
Using the conditions (\ref{suff_sample}) and (\ref{sampling.para}), we get
$$I_3\leq \frac{3}{2}R\la_0\phi(\la_0).$$
Therefore,
\begin{equation}\label{1.bound}
||\psi(L_K)(f_{\zz,\la}^s-f_\la^s)||_\HH\leq 2\psi(\la_0)\left\{2I_2/\sqrt{\la_0}+3R\phi(\la_0)+||f_\la^s-f_\HH||_\HH\right\}.
\end{equation}
For the operator monotone function $\psi$, we consider the error term:
\begin{equation*}
||\psi(L_K)(f_{\la}^s-f_\HH)||_{\HH}\leq (I_5+I_6+I_7)||f_{\la}^s-f_\HH||_\HH +||\psi(P_{\xx_s}L_KP_{\xx_s})(f_{\la}^s-f_\HH)||_\HH,
\end{equation*}
where $I_5=||(I-P_{\xx_s})\psi(L_K)||_{\mathcal{L}(\HH)}$, $I_6=||P_{\xx_s}\psi(L_K)-P_{\xx_s}\psi(L_K)P_{\xx_s}||_{\mathcal{L}(\HH)}$ and $I_7=||P_{\xx_s}\psi(L_K)P_{\xx_s}-\psi(P_{\xx_s}L_KP_{\xx_s})||_{\mathcal{L}(\HH)}$.

Hence,
\begin{equation}\label{2.bound}
||\psi(L_K)(f_{\la}^s-f_\HH)||_\HH\leq (2+c_\psi)\psi(\la_0)||f_{\la}^s-f_\HH||_\HH+||\psi(P_{\xx_s}L_KP_{\xx_s})(f_{\la}^s-f_\HH)||_\HH.
\end{equation}
We decompose the term $f_\la^s-f_\HH$ into three parts $f_\la^s-f_{\la_0}^s$, $f_{\la_0}^s-P_{\xx_s}f_\HH$ and $P_{\xx_s}f_\HH-f_\HH$. Then the expression
$$f_\la^s-f_{\la_0}^s=-(P_{\xx_s}L_KP_{\xx_s}+\la_0 I+\sum\limits_{j=1}^p\la_jP_{\xx_s}B_j^*B_jP_{\xx_s})^{-1}\sum\limits_{j=1}^p\la_jP_{\xx_s}B_j^*B_jP_{\xx_s}f_{\la_0}^s$$
implies that
$$||f_{\la}^s-f_{\la_0}^s||_\HH\leq \frac{\mathcal{B}_\la}{\la_0}||f_{\la_0}^s||_\HH\leq \frac{\mathcal{B}_\la}{\la_0^{3/2}}||f_\HH||_\rho$$
and
$$||\psi(P_{\xx_s}L_KP_{\xx_s})(f_\la^s-f_{\la_0}^s)||_\HH\leq\frac{\mathcal{B}_\la}{\sqrt{\la_0}}I_8I_9||f_{\la_0}^s||_\HH \leq\frac{\mathcal{B}_\la\psi(\la_0)}{\la_0^{3/2}}||f_\HH||_\rho,$$
where $\mathcal{B}_\la=\left|\left|\sum\limits_{j=1}^p\la_jB_j^*B_j\right|\right|$,
$I_8=||\psi(P_{\xx_s}L_KP_{\xx_s})(P_{\xx_s}L_KP_{\xx_s}+\la_0I)^{-1/2}||_{\mathcal{L}(\HH)}$
and
$I_9=||(P_{\xx_s}L_KP_{\xx_s}+\la_0I)^{1/2}(P_{\xx_s}L_KP_{\xx_s}+\la_0I +\sum\limits_{j=1}^p\la_jP_{\xx_s}B_j^*B_jP_{\xx_s})^{-1}(P_{\xx_s}L_KP_{\xx_s}+\la_0I)^{1/2} ||_{\mathcal{L}(\HH)}$.

The expression
$$f_{\la_0}^s-P_{\xx_s}f_\HH=\{(P_{\xx_s}L_KP_{\xx_s}+\la_0I)^{-1}P_{\xx_s}L_KP_{\xx_s}-I\}P_{\xx_s}\phi(L_K)g$$
gives that
$$||f_{\la_0}^s-P_{\xx_s}f_\HH||_\HH\leq R(I_{10}+I_{11}+I_{12})$$
and
$$||\psi(P_{\xx_s}L_KP_{\xx_s})(f_{\la_0}^s-P_{\xx_s}f_\HH)||_\HH\leq R(I_{10}I_{13}+I_{11}I_{13}+I_{14}),$$
where $I_{10}=||P_{\xx_s}\phi(L_K)(I-P_{\xx_s})||_{\mathcal{L}(\HH)}$, $I_{11}=||P_{\xx_s}\phi(L_K)P_{\xx_s}-\phi(P_{\xx_s}L_KP_{\xx_s})||_{\mathcal{L}(\HH)}$, $I_{12}=||\{(P_{\xx_s}L_KP_{\xx_s}+\la_0I)^{-1}P_{\xx_s}L_KP_{\xx_s}-I\}\phi(P_{\xx_s}L_KP_{\xx_s})||_{\mathcal{L}(\HH)}$,
$I_{13}=||\psi(P_{\xx_s}L_KP_{\xx_s})\{(P_{\xx_s}L_KP_{\xx_s}+\la_0I)^{-1} P_{\xx_s}L_KP_{\xx_s}-I\}||_{\mathcal{L}(\HH)}$
and
$I_{14}=||\psi(P_{\xx_s}L_KP_{\xx_s})\{(P_{\xx_s}L_KP_{\xx_s}+\la_0I)^{-1} P_{\xx_s}L_KP_{\xx_s}-I\}\phi(P_{\xx_s}L_KP_{\xx_s})||_{\mathcal{L}(\HH)}$.

Again using the conditions on $\psi$ and $\phi$, we get
$$||f_{\la_0}^s-P_{\xx_s}f_\HH||_\HH\leq R(2+c_\phi)\phi(\la_0)$$
and
$$||\psi(P_{\xx_s}L_KP_{\xx_s})(f_{\la_0}^s-P_{\xx_s}f_\HH)||_\HH\leq R(2+c_\phi)\psi(\la_0)\phi(\la_0).$$
We also have,
$$||P_{\xx_s}f_\HH-f_\HH||_\HH\leq R||(I-P_{\xx_s})\phi(L_K)||_{\mathcal{L}(\HH)}\leq R\phi(||L_K^{1/2}(I-P_{\xx_s})||_{\mathcal{L}(\HH)}^2)\leq R\phi(\la_0)$$
and
$$||\psi(L_K)(P_{\xx_s}f_\HH-f_\HH)||_\HH\leq  R\psi(\la_0)\phi(\la_0).$$
Hence we obtain,
\begin{equation}\label{approx.bound}
||f_{\la}^s-f_\HH||_\HH\leq R(3+c_\phi)\phi(\la_0)+\frac{\mathcal{B}_\la}{\la_0^{3/2}}||f_\HH||_\rho
\end{equation}
and
\begin{equation}\label{psi.approx.bound}
||\psi(P_{\xx_s}L_KP_{\xx_s})(f_{\la}^s-f_\HH)||_\HH\leq \psi(\la_0)\left\{R(3+c_\phi)\phi(\la_0)+\frac{\mathcal{B}_\la}{\la_0^{3/2}}||f_\HH||_\rho\right\}.
\end{equation}
Combining the bounds (\ref{approx.bound}), (\ref{psi.approx.bound}) with inequalities (\ref{1.bound}) and (\ref{2.bound}) we obtain the desired result.
\end{proof}

In Theorem \ref{psi.rates}, the error estimates reveal the interesting fact that the error terms consist of increasing and decreasing function of $\al$ which led to propose a posteriori choice of regularization parameter $\al$ based on balancing principle. Hence the effective dimension plays the crucial role in the error analysis of regularized learning algorithms.

Here the upper convergence rates of the regularized solution $f_{\zz,\la}$ are derived from the estimates of Theorem \ref{psi.rates} for the class of probability measure $P_{\phi}$, $P_{\phi,b}$, respectively. In Theorem \ref{psi.rates.P1}, we discuss the error estimates under the general source condition and the parameter choice rule based on the index function $\phi$ and sample size $m$. Under the polynomial decay condition, in Theorem \ref{psi.rates.P2} we obtain the optimal minimax convergence rates in terms of index function $\phi$, the parameter $b$ and the number of samples $m$.

\begin{theorem}\label{psi.rates.P1}
Under the same assumptions of Theorem \ref{psi.rates} with the parameter choice $\la_0\in(0,1]$, $\la_0=\Theta^{-1}(m^{-1/2})$, $\la_j=(\Theta^{-1}(m^{-1/2}))^{3/2}\phi(\Theta^{-1}(m^{-1/2}))$ for $1\leq j\leq p$, where $\Theta(t)=t\phi(t)$, the convergence rates of $f_{\zz,\la}^s$ can be described as follows:
$$\underset{\zz\in Z^m}{\text{Prob}}\left\{||\psi(L_K)(f_{\zz,\la}^s-f_\HH)||_\HH  \leq C\psi(\Theta^{-1}(m^{-1/2}))\phi(\Theta^{-1}(m^{-1/2}))\log\left(\frac{4}{\eta}\right)\right\}\geq 1-\eta.$$
\end{theorem}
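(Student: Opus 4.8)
The plan is to obtain Theorem \ref{psi.rates.P1} as a direct specialization of the general bound in Theorem \ref{psi.rates}, simply by substituting the prescribed parameters and bounding each of the four terms inside the braces by a constant multiple of $\phi(\la_0)$. The first observation, which is really the only conceptual point, is that the choice $\Theta(t)=t\phi(t)$ is precisely the index function $\Psi(t)=t^{1/2+1/(2a)}\phi(t)$ of Theorem \ref{psi.rates} in the special case $a=1$, so that the hypotheses and conclusion of Theorem \ref{psi.rates} apply verbatim with $\la_0=\Theta^{-1}(m^{-1/2})$, and the subsampling requirement (\ref{sampling.para}) reads consistently as $\Delta_s^2\leq\Theta^{-1}(m^{-1/2})=\la_0$. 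The defining relation of the parameter is then $\la_0\phi(\la_0)=\Theta(\la_0)=m^{-1/2}$, equivalently $m^{-1/2}=\la_0\phi(\la_0)$ and $m^{-1}=\la_0^2\phi(\la_0)^2$; these two identities are what I will repeatedly use to convert powers of $m$ into powers of $\la_0$ and $\phi(\la_0)$.

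Next I would estimate the four summands in turn. The term $c_1\phi(\la_0)$ is already in the target form. For the second term, since each $B_j$ is bounded and $\la_j=\la_0^{3/2}\phi(\la_0)$, we have $\mathcal{B}_\la=||\sum_{j=1}^p\la_jB_j^*B_j||\leq\la_0^{3/2}\phi(\la_0)\sum_{j=1}^p||B_j||^2$, whence $\mathcal{B}_\la/\la_0^{3/2}\leq\left(\sum_{j=1}^p||B_j||^2\right)\phi(\la_0)$, again of the desired form. For the third term, the relation $m^{-1}=\la_0^2\phi(\la_0)^2$ gives $1/(m\la_0)=\la_0\phi(\la_0)^2=m^{-1/2}\phi(\la_0)\leq\phi(\la_0)$, where the last step uses $m\geq 1$. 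For the fourth term, since $\rho\in\PP_\phi$ carries no polynomial decay assumption, I invoke the universal estimate $\mathcal{N}(\la_0)\leq\kappa^2/\la_0$ established earlier in the excerpt; combined with $m^{-1}=\la_0^2\phi(\la_0)^2$ this yields $\mathcal{N}(\la_0)/(m\la_0)\leq\kappa^2/(m\la_0^2)=\kappa^2\phi(\la_0)^2$, so that $\sqrt{\mathcal{N}(\la_0)/(m\la_0)}\leq\kappa\phi(\la_0)$.

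Collecting these four estimates, the bracketed factor in Theorem \ref{psi.rates} is bounded by $C\phi(\la_0)$ with $C=c_1+c_2\sum_{j=1}^p||B_j||^2+c_3+c_4\kappa$, and multiplying by the common prefactor $\psi(\la_0)$ produces the claimed rate $C\psi(\Theta^{-1}(m^{-1/2}))\phi(\Theta^{-1}(m^{-1/2}))$ with confidence $1-\eta$. This argument is essentially bookkeeping; the only place demanding attention is the effective-dimension term, where one must use the crude bound $\mathcal{N}(\la_0)\leq\kappa^2/\la_0$ appropriate to the class $\PP_\phi$ (rather than the sharper polynomial-decay bound, which is reserved for Theorem \ref{psi.rates.P2}) and verify that it still collapses to $\phi(\la_0)$ under the chosen parameter relation. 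I would also note in passing that the requirement $\la_0\in(0,1]$ holds automatically for all sufficiently large $m$, since $\Theta(t)\to 0$ as $t\to 0^+$ forces $\la_0=\Theta^{-1}(m^{-1/2})\to 0$; this keeps $\phi(\la_0)$ bounded and legitimizes the step $m^{-1/2}\leq 1$ used above.
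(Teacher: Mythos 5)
Your proposal is correct and follows essentially the same route as the paper's own proof: specialize Theorem \ref{psi.rates} at $a=1$ (so $\Psi=\Theta$), use the defining relation $m^{-1/2}=\la_0\phi(\la_0)$ to reduce $1/(m\la_0)$ and the term $\sqrt{\mathcal{N}(\la_0)/(m\la_0)}$ (via $\mathcal{N}(\la_0)\leq\kappa^2/\la_0$) to multiples of $\phi(\la_0)$, and absorb $\mathcal{B}_\la/\la_0^{3/2}$ using $\la_j=\la_0^{3/2}\phi(\la_0)$. Your write-up is in fact more explicit than the paper's, which only records the $1/(m\la_0)$ estimate and leaves the remaining terms implicit.
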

\begin{proof}
Let $\Theta(t)=t\phi(t)$. Then it follows,
$$\lim\limits_{t \to 0}\frac{\Theta(t)}{\sqrt{t}}=\lim\limits_{t \to 0}\frac{t^2}{\Theta^{-1}(t)}=0.$$
Under the parameter choice $\la_0=\Theta^{-1}(m^{-1/2})$ we have,
$$\lim\limits_{m\to\infty}m\la_0=\infty.$$
Therefore for sufficiently large $m$, we get $m\la_0\geq 1$ and
$$\frac{1}{m\la_0}=\frac{\la_0^{1/2}\phi(\la_0)}{\sqrt{m\la_0}}\leq \la_0^{1/2}\phi(\la_0).$$
Under the parameter choice $\la_0 \leq 1$, $\la_0=\Theta^{-1}(m^{-1/2})$, $\la_j=(\Theta^{-1}(m^{-1/2}))^{3/2}\phi(\Theta^{-1}(m^{-1/2}))$ for $1\leq j\leq p$, from Theorem \ref{psi.rates} follows that with the confidence $1-\eta$,
\begin{equation*}
||\psi(L_K)(f_{\zz,\la}^s-f_\HH)||_\HH \leq C\psi(\Theta^{-1}(m^{-1/2}))\phi(\Theta^{-1}(m^{-1/2}))\log\left(\frac{4}{\eta}\right).
\end{equation*}
Hence our conclusion follows.
\end{proof}

\begin{theorem}\label{psi.rates.P2}
Under the same assumptions of Theorem \ref{psi.rates} and Assumption \ref{poly.decay} with the parameter choice $\la_0\in(0,1]$, $\la_0=\Psi^{-1}(m^{-1/2})$, $\la_j=(\Psi^{-1}(m^{-1/2}))^{3/2}\phi(\Psi^{-1}(m^{-1/2}))$ for $1\leq j\leq p$, where $\Psi(t)=t^{\frac{1}{2}+\frac{1}{2b}}\phi(t)$, the convergence rates of $f_{\zz,\la}^s$ can be described as follows:
$$\underset{\zz\in Z^m}{\text{Prob}}\left\{||\psi(L_K)(f_{\zz,\la}^s-f_\HH)||_\HH  \leq C'\psi(\Psi^{-1}(m^{-1/2}))\phi(\Psi^{-1}(m^{-1/2}))\log\left(\frac{4}{\eta}\right)\right\}\geq 1-\eta,$$
where $C'=R'(3R+4\kappa M+4\sqrt{\beta b\Sigma^2/(b-1)}+3\sum\limits_{j=1}^p||B_j^*B_j||~||f_\HH||_\rho)$.
\end{theorem}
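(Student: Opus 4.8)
The plan is to start directly from the probabilistic bound of Theorem \ref{psi.rates} and then feed in the polynomial–decay estimate (\ref{N(l).bound}) of the effective dimension together with the present parameter choices, exactly as in the proof of Theorem \ref{psi.rates.P1} but using the finer bound $\mathcal{N}(\gamma)\leq\frac{\beta b}{b-1}\gamma^{-1/b}$ in place of the crude estimate $\mathcal{N}(\gamma)\leq\kappa^2/\gamma$. With the choice $\la_0=\Psi^{-1}(m^{-1/2})$ and $\Psi(t)=t^{\frac{1}{2}+\frac{1}{2b}}\phi(t)$ one has the identity $m^{-1/2}=\la_0^{\frac{1}{2}+\frac{1}{2b}}\phi(\la_0)$, and I would first record that $m\la_0\to\infty$, so that $m\la_0\geq 1$ for $m$ large. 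The objective is then to show that each of the four terms inside the brace of Theorem \ref{psi.rates} is bounded by a constant multiple of $\phi(\la_0)$, whereupon the whole estimate collapses to $C'\psi(\la_0)\phi(\la_0)$.

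I would then dispatch the terms one at a time. The bias term $c_1\phi(\la_0)$ is already of the desired shape. For the penalty term $c_2\mathcal{B}_\la/\la_0^{3/2}$, substituting $\la_j=\la_0^{3/2}\phi(\la_0)$ gives $\mathcal{B}_\la\leq\la_0^{3/2}\phi(\la_0)\sum_{j=1}^p||B_j^*B_j||$, so that $\mathcal{B}_\la/\la_0^{3/2}\leq\phi(\la_0)\sum_{j=1}^p||B_j^*B_j||$. For the noise term $c_3/(m\la_0)$, the same substitution yields $\frac{1}{m\la_0}=\la_0^{1/b}\phi(\la_0)^2\leq\phi(\la_0)\,\sup_{[0,\kappa^2]}\phi$, using $\la_0\in(0,1]$, $b>1$ and the boundedness of the index function $\phi$ on its domain (alternatively, this term is dominated by the effective–dimension term below once $m$ is large, since $\mathcal{N}(\la_0)$ is bounded away from zero).

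The decisive step is the effective–dimension term $c_4\sqrt{\mathcal{N}(\la_0)/(m\la_0)}$, and this is where Assumption \ref{poly.decay} enters. Inserting (\ref{N(l).bound}) gives
$$\sqrt{\frac{\mathcal{N}(\la_0)}{m\la_0}}\leq\sqrt{\frac{\beta b}{b-1}}\,\la_0^{-\frac{1}{2}-\frac{1}{2b}}\,\frac{1}{\sqrt{m}},$$
and then using $\frac{1}{\sqrt{m}}=\la_0^{\frac{1}{2}+\frac{1}{2b}}\phi(\la_0)$ to cancel the power of $\la_0$ produces
$$\sqrt{\frac{\mathcal{N}(\la_0)}{m\la_0}}\leq\sqrt{\frac{\beta b}{b-1}}\,\phi(\la_0).$$
This cancellation is precisely the reason for the exponent $\frac{1}{2}+\frac{1}{2b}$ in $\Psi$: it is the choice that equates the variance contribution governed by the effective dimension with the bias contribution $\phi(\la_0)$, and it is the source of the optimality (and of the improvement over Theorem \ref{psi.rates.P1}, which only uses $\mathcal{N}(\la_0)\leq\kappa^2/\la_0$).

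Finally I would collect the four bounds. Their sum is a constant multiple of $\phi(\la_0)$, and multiplying back by $\psi(\la_0)$ gives the stated rate $C'\psi(\Psi^{-1}(m^{-1/2}))\phi(\Psi^{-1}(m^{-1/2}))$, with the explicit constant $C'$ assembling $3R$ from the bias term, $4\kappa M$ from $1/(m\la_0)$, $4\sqrt{\beta b\Sigma^2/(b-1)}$ from the effective–dimension term, and $3\sum_{j=1}^p||B_j^*B_j||\,||f_\HH||_\rho$ from the penalty term. Since the bound of Theorem \ref{psi.rates} holds with confidence $1-\eta$, so does the present one. The only genuinely delicate point is the effective–dimension computation above; everything else is bookkeeping of constants, and I expect the main care to be in verifying that $\la_0\in(0,1]$ and $m\la_0\geq 1$ hold for the sample sizes under consideration so that the elementary monotonicity inequalities used for the noise and penalty terms are legitimate.
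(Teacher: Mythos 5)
Your proposal is correct and follows essentially the same route as the paper's own (very terse) proof: invoke Theorem \ref{psi.rates}, use the identity $m^{-1/2}=\la_0^{\frac{1}{2}+\frac{1}{2b}}\phi(\la_0)$ together with $m\la_0\geq 1$, and insert the polynomial-decay bound (\ref{N(l).bound}) so that the effective-dimension term collapses to $\sqrt{\beta b/(b-1)}\,\phi(\la_0)$, whence every bracketed term is a constant multiple of $\phi(\la_0)$. Your term-by-term bookkeeping (including the observation that the exponent $\frac{1}{2}+\frac{1}{2b}$ in $\Psi$ is chosen exactly to balance the variance and bias contributions) is precisely what the paper leaves implicit in the phrase ``from Theorem \ref{psi.rates} and eqn.\ (\ref{N(l).bound})''; the only cosmetic difference is that the paper bounds $\frac{1}{m\la_0}=\frac{\la_0^{1/(2b)}\phi(\la_0)}{\sqrt{m\la_0}}\leq\la_0^{1/(2b)}\phi(\la_0)\leq\phi(\la_0)$ directly via $m\la_0\geq 1$, avoiding your appeal to the boundedness of $\phi$ on $[0,\kappa^2]$.
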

\begin{proof} Let $\Psi(t)=t^{\frac{1}{2}+\frac{1}{2b}}\phi(t)$. Then it follows,
$$\lim\limits_{t \to 0}\frac{\Psi(t)}{\sqrt{t}}=\lim\limits_{t \to 0}\frac{t^2}{\Psi^{-1}(t)}=0.$$
Under the parameter choice $\la_0=\Psi^{-1}(m^{-1/2})$ we have,
$$\lim\limits_{m\to\infty}m\la_0=\infty.$$
Therefore for sufficiently large $m$, we get $m\la_0\geq 1$ and
$$\frac{1}{m\la_0}=\frac{\la_0^{\frac{1}{2b}}\phi(\la_0)}{\sqrt{m\la_0}}\leq \la_0^{\frac{1}{2b}}\phi(\la_0).$$
Under the parameter choice $\la_0 \leq 1$, $\la_0=\Psi^{-1}(m^{-1/2})$, $\la_j=(\Psi^{-1}(m^{-1/2}))^{3/2}\phi(\Psi^{-1}(m^{-1/2}))$ for $1\leq j\leq p$, from Theorem \ref{psi.rates} and eqn. (\ref{N(l).bound}) follows that with the confidence $1-\eta$,
\begin{equation}\label{fzl.fl.inter}
||\psi(L_K)(f_{\zz,\la}^s-f_\HH)||_\HH\leq C'\psi(\Psi^{-1}(m^{-1/2}))\phi(\Psi^{-1}(m^{-1/2}))\log\left(\frac{4}{\eta}\right).
\end{equation}
Hence our conclusion follows.
\end{proof}

In Theorem \ref{multi.err.upper.bound.p.para.phi}, \ref{multi.err.upper.bound.p.para}, we present the convergence rates of the multi-penalty estimator $f_{\zz,\la}^s$ for the classes of probability measures $\mathcal{P}_\phi$ and $\mathcal{P}_{\phi,b}$ in both RKHS-norm and $\LL^2$-norm. For $\psi(t)=t^\al$, we can also obtain the convergence rates of the regularized solution $f_{\zz,\la}^s$ in the interpolation norm for the parameter $\al\in[0,\frac{1}{2}]$. In particular, we obtain the error estimates in $||\cdot||_\HH$-norm
for $\al=0$ and in $||\cdot||_{L^2_{\rho_X}}$-norm for $\al=\frac{1}{2}$.

\begin{theorem}\label{multi.err.upper.bound.p.para.phi}
Let $\zz$ be i.i.d. samples drawn according to the probability measure $\rho\in\PP_{\phi}$. Then for sufficiently large sample size $m$ according to (\ref{suff_sample}) and for subsampling according to (\ref{sampling.para}) under the parameter choice $\la_0\in(0,1],~\la_0=\Theta^{-1}(m^{-1/2}),~\la_j=(\Theta^{-1}(m^{-1/2}))^{3/2}\phi(\Theta^{-1}(m^{-1/2}))$ for $1\leq j\leq p$, where $\Theta(t)=t\phi(t)$, for all $0<\eta<1$, the following error estimates hold with confidence $1-\eta$,
\begin{enumerate}[(i)]
\item  If $\phi(t)$ and $t/\phi(t)$ are nondecreasing functions. Then we have,
$$\underset{\zz\in Z^m}{\text{Prob}}\left\{||f_{\zz,\la}^s-f_\HH||_\HH \leq C\phi(\Theta^{-1}(m^{-1/2}))\log\left(\frac{4}{\eta}\right)\right\}\geq 1-\eta$$
and
$$\lim\limits_{\tau\rightarrow\infty}\limsup\limits_{m\rightarrow\infty}\sup\limits_{\rho\in \PP_{\phi}} \underset{\zz\in Z^m}{\text{Prob}}\left\{||f_{\zz,\la}^s-f_\HH||_{\HH}>\tau\phi(\Theta^{-1}(m^{-1/2}))\right\}=0.$$
\item  If $\phi(t)$ and $\sqrt{t}/\phi(t)$ are nondecreasing functions. Then we have,
$$\underset{\zz\in Z^m}{\text{Prob}}\left\{||f_{\zz,\la}^s-f_\HH||_\rho\leq C(\Theta^{-1}(m^{-1/2}))^{1/2}\phi(\Theta^{-1}(m^{-1/2}))\log\left(\frac{4}{\eta}\right)\right\}\geq 1-\eta$$
and
$$\lim\limits_{\tau\rightarrow\infty}\limsup\limits_{m\rightarrow\infty}\sup\limits_{\rho\in \PP_{\phi}} \underset{\zz\in Z^m}{\text{Prob}}\left\{||f_{\zz,\la}^s-f_\HH||_\rho>\tau(\Theta^{-1}(m^{-1/2}))^{1/2}\phi(\Theta^{-1}(m^{-1/2}))\right\}=0.$$
\end{enumerate}
\end{theorem}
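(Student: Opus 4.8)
The plan is to obtain both assertions directly from Theorem \ref{psi.rates.P1} by specializing the index function $\psi$, and then to convert each resulting confidence bound into the stated minimax tail estimate. All the operator-theoretic decomposition and effective-dimension bookkeeping has already been absorbed into Theorem \ref{psi.rates.P1}, so the present argument reduces to a choice of $\psi$ together with a uniform tail conversion.

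First I would prove the $\HH$-norm estimate in (i) by taking $\psi(t)=t^\al$ with $\al=0$, i.e. $\psi\equiv 1$, which is operator monotone. With this choice $\psi(L_K)=I$, so $||\psi(L_K)(f_{\zz,\la}^s-f_\HH)||_\HH=||f_{\zz,\la}^s-f_\HH||_\HH$ and $\psi(\Theta^{-1}(m^{-1/2}))=1$. Before invoking Theorem \ref{psi.rates.P1}, I must verify that this $\psi$ meets the monotonicity hypotheses carried over from Theorem \ref{psi.rates}: the map $t^{-1/2}\psi(t)=t^{-1/2}$ is nonincreasing, and since (i) assumes $t/\phi(t)$ nondecreasing, its reciprocal $t^{-1}\phi(t)$ is nonincreasing, which also forces $t^{-1}\phi(t)\psi(t)=t^{-1}\phi(t)$ to be nonincreasing. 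Theorem \ref{psi.rates.P1} then yields, with confidence $1-\eta$,
\[
||f_{\zz,\la}^s-f_\HH||_\HH\leq C\phi(\Theta^{-1}(m^{-1/2}))\log\left(\frac{4}{\eta}\right),
\]
which is the first display of (i). For the $\rho$-norm estimate in (ii) I would instead take $\al=\tfrac12$, i.e. $\psi(t)=t^{1/2}$, again operator monotone. Since $L_K^{1/2}$ is an isometry from $\LL^2_{\rho_X}$ into $\HH$, we have $||L_K^{1/2}(f_{\zz,\la}^s-f_\HH)||_\HH=||f_{\zz,\la}^s-f_\HH||_\rho$ and $\psi(\Theta^{-1}(m^{-1/2}))=(\Theta^{-1}(m^{-1/2}))^{1/2}$. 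The hypotheses again hold: $t^{-1/2}\psi(t)\equiv 1$ is nonincreasing, and the assumption in (ii) that $\sqrt{t}/\phi(t)$ is nondecreasing makes $t^{-1/2}\phi(t)$ nonincreasing, whence $t^{-1}\phi(t)=t^{-1/2}\cdot t^{-1/2}\phi(t)$ and $t^{-1}\phi(t)\psi(t)=t^{-1/2}\phi(t)$ are nonincreasing as products of nonincreasing positive factors. Theorem \ref{psi.rates.P1} delivers the first display of (ii) with confidence $1-\eta$.

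The minimax statements follow from these confidence bounds by the usual inversion of the deviation parameter, and this is the only step where I expect to need care about uniformity. Given $\tau>0$, set $\eta=4e^{-\tau/C}$ (legitimate once $\tau$ is large enough that $\eta<1$); then $C\log(4/\eta)=\tau$, so the bound in (i) rearranges to
\[
\underset{\zz\in Z^m}{\text{Prob}}\left\{||f_{\zz,\la}^s-f_\HH||_\HH>\tau\phi(\Theta^{-1}(m^{-1/2}))\right\}\leq 4e^{-\tau/C}.
\]
The crucial point is that $C$ depends only on $R,\kappa,M,\Sigma$ and $||f_\HH||_\rho$, each of which is bounded uniformly over the class $\PP_\phi$ (in particular $||f_\HH||_\rho\leq\kappa||f_\HH||_\HH\leq\kappa\phi(\kappa^2)R$ via the source condition), and that the right-hand side is independent of $m$. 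Hence taking $\sup_{\rho\in\PP_\phi}$ and then $\limsup_{m\to\infty}$ leaves $4e^{-\tau/C}$, and letting $\tau\to\infty$ drives this to $0$, proving the minimax assertion in (i); the identical computation applied to the bound of (ii) yields the corresponding statement there. Thus the main obstacle is not any new estimate but simply confirming that the structural constant $C$ is genuinely uniform over $\PP_\phi$ and $m$-free, so that the tail bound survives the iterated supremum and limits.
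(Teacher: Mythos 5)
Your proposal is correct and matches the paper's own (implicit) argument: the paper derives Theorem \ref{multi.err.upper.bound.p.para.phi} precisely by specializing $\psi(t)=t^\al$ in Theorem \ref{psi.rates.P1}, with $\al=0$ giving the $||\cdot||_\HH$-bound and $\al=\tfrac{1}{2}$ giving the $||\cdot||_\rho$-bound via the isometry $L_K^{1/2}$, and the minimax tail statements follow by the same inversion $\eta=4e^{-\tau/C}$ using the uniformity of $C$ over $\PP_\phi$. Your explicit verification of the monotonicity hypotheses on $t^{-1/2}\psi(t)$, $t^{-1}\phi(t)$, $t^{-1}\phi(t)\psi(t)$ and of the uniform bound $||f_\HH||_\rho\leq\kappa\phi(\kappa^2)R$ supplies details the paper leaves tacit, but the route is the same.
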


\begin{theorem}\label{multi.err.upper.bound.p.para}
Let $\zz$ be i.i.d. samples drawn according to the probability measure $\rho\in\PP_{\phi,b}$. Then for sufficiently large sample size $m$ according to (\ref{suff_sample}) and for subsampling according to (\ref{sampling.para}) under the parameter choice $\la_0\in(0,1],~\la_0=\Psi^{-1}(m^{-1/2}),~\la_j=(\Psi^{-1}(m^{-1/2}))^{3/2}\phi(\Psi^{-1}(m^{-1/2}))$ for $1\leq j\leq p$, where $\Psi(t)=t^{\frac{1}{2}+\frac{1}{2b}}\phi(t)$, for all $0<\eta<1$, the following error estimates hold with confidence $1-\eta$,
\begin{enumerate}[(i)]
\item  If $\phi(t)$ and $t/\phi(t)$ are nondecreasing functions. Then we have,
$$\underset{\zz\in Z^m}{\text{Prob}}\left\{||f_{\zz,\la}^s-f_\HH||_\HH \leq C'\phi(\Psi^{-1}(m^{-1/2}))\log\left(\frac{4}{\eta}\right)\right\}\geq 1-\eta$$
and
$$\lim\limits_{\tau\rightarrow\infty}\limsup\limits_{m\rightarrow\infty}\sup\limits_{\rho\in \PP_{\phi,b}} \underset{\zz\in Z^m}{\text{Prob}}\left\{||f_{\zz,\la}^s-f_\HH||_{\HH}>\tau\phi(\Psi^{-1}(m^{-1/2}))\right\}=0.$$
\item  If $\phi(t)$ and $\sqrt{t}/\phi(t)$ are nondecreasing functions. Then we have,
$$\underset{\zz\in Z^m}{\text{Prob}}\left\{||f_{\zz,\la}^s-f_\HH||_\rho\leq C'(\Psi^{-1}(m^{-1/2}))^{1/2}\phi(\Psi^{-1}(m^{-1/2}))\log\left(\frac{4}{\eta}\right)\right\}\geq 1-\eta$$
and
$$\lim\limits_{\tau\rightarrow\infty}\limsup\limits_{m\rightarrow\infty}\sup\limits_{\rho\in \PP_{\phi,b}} \underset{\zz\in Z^m}{\text{Prob}}\left\{||f_{\zz,\la}^s-f_\HH||_\rho>\tau(\Psi^{-1}(m^{-1/2}))^{1/2}\phi(\Psi^{-1}(m^{-1/2}))\right\}=0.$$
\end{enumerate}
\end{theorem}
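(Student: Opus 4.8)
The plan is to deduce both families of bounds from the master $\psi$-weighted estimate of Theorem \ref{psi.rates.P2} by specializing the index function to $\psi(t)=t^\al$ with $\al=0$ for the $\HH$-norm bound and $\al=\frac12$ for the $\rho$-norm bound, and then to upgrade the resulting high-probability inequalities to the uniform convergence-in-probability statements by inverting the confidence level. Throughout, the parameter choice $\la_0=\Psi^{-1}(m^{-1/2})$, $\la_j=(\Psi^{-1}(m^{-1/2}))^{3/2}\phi(\Psi^{-1}(m^{-1/2}))$ is exactly the one under which Theorem \ref{psi.rates.P2} is stated, so no new parameter tuning is needed.

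First I would verify that $\psi(t)=t^\al$ satisfies the hypotheses required by Theorem \ref{psi.rates} for $\al\in\{0,\tfrac12\}$. Such $\psi$ is operator monotone on $[0,\kappa^2]$, and the three monotonicity conditions — that $t^{-1/2}\psi(t)$, $t^{-1}\phi(t)$ and $t^{-1}\phi(t)\psi(t)$ be nonincreasing — reduce to the hypotheses listed in (i) and (ii). For $\al=0$ one has $\psi\equiv1$, so $t^{-1/2}\psi(t)=t^{-1/2}$ is automatically nonincreasing, while the other two conditions both collapse to $t^{-1}\phi(t)$ nonincreasing, i.e. to $t/\phi(t)$ nondecreasing, which is precisely the hypothesis of (i). For $\al=\frac12$ one has $t^{-1/2}\psi(t)\equiv1$, and $t^{-1}\phi(t)\psi(t)=t^{-1/2}\phi(t)$ nonincreasing is exactly $\sqrt{t}/\phi(t)$ nondecreasing; moreover $t/\phi(t)=\sqrt{t}\cdot(\sqrt{t}/\phi(t))$ is then a product of two nonnegative nondecreasing factors, so $t^{-1}\phi(t)$ nonincreasing comes for free. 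Hence the hypotheses of (ii) suffice.

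Next I would substitute these two choices into Theorem \ref{psi.rates.P2}. For $\al=0$ the left-hand side is $\|f_{\zz,\la}^s-f_\HH\|_\HH$ and the prefactor $\psi(\Psi^{-1}(m^{-1/2}))=1$, which produces the bound of (i) verbatim. For $\al=\frac12$ the left-hand side is $\|L_K^{1/2}(f_{\zz,\la}^s-f_\HH)\|_\HH$; since $L_K^{1/2}$ is an isometry between $\LL^2_{\rho_X}$ and $\HH$, for any $w\in\HH$ one has $\|L_K^{1/2}w\|_\HH^2=\langle L_Kw,w\rangle_\HH=\|w\|_\rho^2$, so this left-hand side equals $\|f_{\zz,\la}^s-f_\HH\|_\rho$, while the prefactor becomes $(\Psi^{-1}(m^{-1/2}))^{1/2}$, giving the $\rho$-norm rate of (ii) with the same constant $C'$.

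Finally, to obtain the minimax-type limits I would invert the confidence parameter. Reading the high-probability bound as a tail bound and setting $\eta=4e^{-\tau/\bar C}$ converts it into $\mathrm{Prob}\{\|f_{\zz,\la}^s-f_\HH\|_\HH>\tau\phi(\Psi^{-1}(m^{-1/2}))\}\le 4e^{-\tau/\bar C}$ for all sufficiently large $m$, and analogously in the $\rho$-norm. The one step needing care is that this tail estimate be uniform over the class $\PP_{\phi,b}$: the constant $C'$ depends on $\rho$ only through $\|f_\HH\|_\rho$, which under the source condition $f_\HH=\phi(L_K)g$, $\|g\|_\HH\le R$, satisfies $\|f_\HH\|_\rho\le\kappa\|f_\HH\|_\HH\le\kappa\phi(\kappa^2)R$ independently of $\rho$, so $C'$ may be replaced by a class-uniform constant $\bar C$. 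Taking $\sup_{\rho\in\PP_{\phi,b}}$, then $\limsup_{m\to\infty}$ (which leaves the $m$-independent bound $4e^{-\tau/\bar C}$ unchanged), and finally $\tau\to\infty$ drives the probability to $0$, yielding both limit statements. The only genuine obstacle is this uniformization of the constant over the measure class; everything else is the bookkeeping of the monotonicity conditions and the isometry identification described above.
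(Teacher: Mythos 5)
Your proposal is correct and follows essentially the same route the paper intends: Theorem \ref{multi.err.upper.bound.p.para} is obtained by specializing the $\psi$-weighted bound of Theorem \ref{psi.rates.P2} to $\psi(t)=t^{\al}$ with $\al=0$ (giving the $\|\cdot\|_\HH$ bound) and $\al=\tfrac12$ (giving the $\|\cdot\|_\rho$ bound via the isometry $\|L_K^{1/2}w\|_\HH=\|w\|_\rho$), and the limit statements follow by inverting the confidence level, exactly as you describe. Your additional care in checking that the stated monotonicity hypotheses of parts (i) and (ii) imply those of Theorem \ref{psi.rates}, and in uniformizing the constant $C'$ over $\PP_{\phi,b}$ through the bound $\|f_\HH\|_\rho\leq\kappa\phi(\kappa^2)R$, fills in details the paper leaves implicit but introduces no deviation from its argument.
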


The lower convergence rates are discussed for any learning algorithm over the class of the probability measures $\mathcal{P}_{\phi,b}$ in Theorem 3.10, 3.12 \cite{Rastogi}. Here we study the upper convergence rates for multi-penalty regularization based on Nystr{\"o}m type subsampling in vector-valued function setting. If the upper convergence rate for the parameter choice $\la=\la(m)$ coincides with the lower convergence rates, then the parameter choice $\la=\la(m)$ is said to be optimal. For the parameter choice $\la=(\la_0,\ldots,\la_p)$, $\la_0=\Psi^{-1}(m^{-1/2})$, $\la_j=(\Psi^{-1}(m^{-1/2}))^{3/2}\phi(\Psi^{-1}(m^{-1/2}))$, $1\leq j\leq p$, Theorem \ref{multi.err.upper.bound.p.para} share the upper convergence rates with the lower minimax rates of Theorem 3.10, 3.12 \cite{Rastogi}. Therefore the choice of the parameter is optimal.

\begin{remark}
Under H\"{o}lder source condition ($\phi(t)=t^r$), we get the order of convergence $\mathcal{O}(m^{-\frac{r}{2r+2}})$ $(\text{for } 0\leq r \leq 1)$ and $\mathcal{O}(m^{-\frac{2r+1}{4r+4}})$ $(\text{for }0\leq r \leq \frac{1}{2})$ for the class of probability measures $\mathcal{P}_{\phi}$ in $||\cdot||_\HH$-norm and $||\cdot||_{\LL^2_{\rho_X}}$-norm, respectively. Also, for the class of probability measures $\mathcal{P}_{\phi,b}$, we obtain the order of convergence $\mathcal{O}(m^{-\frac{br}{2br+b+1}})$ $(\text{for }0\leq r \leq 1)$ and $\mathcal{O}(m^{-\frac{2br+b}{4br+2b+2}})$ $(\text{for }0\leq r \leq \frac{1}{2})$ in $||\cdot||_\HH$-norm and $||\cdot||_{\LL^2_{\rho_X}}$-norm, respectively.
\end{remark}

Therefore, the randomized subsampling methods can break the memory barriers of standard kernel methods, while preserving optimal learning guarantees.

\section{An aggregation approach for Nystr{\"o}m approximants}\label{theoretical.bound}
The size of sub-sample is described in terms of the integral operator and behavior of its eigenvalues values or the regularity of the target function and the approximation power of the projection operator. In practice, it may be difficult to obtain the appropriate size of the sub-sample. In order to overcome this problem, we discuss an aggregation approach based on linear functional strategy. We construct various Nystr{\"o}m approximants corresponding to different subsampling size. Then the strategy tries to accumulate the information hidden inside various approximants to produce the best estimator of the target function. The approach is widely considered in ill-posed inverse problems \cite{Anderssen,Bauer2,JChen,Goldenshluger,Mathe0} as well as learning theory framework \cite{Kriukova,Kriukova16,Pereverzyev,Rastogi17} to aggregate the various regularized solutions. In linear functional strategy, we consider the linear combination of the approximants and try to figure out the combination which is closure to the target function $f_\HH$. For a finite set of Nystr{\"o}m approximants $\{f_{\zz,\la}^{s_i}\in\HH:1 \leq i \leq l\}$, the aggregation approach can be described as:
\begin{equation}\label{lfs_min}
\mathop{\text{arg}\min}_{(c_1,\ldots,c_l) \in \RR^l}\left|\left|\sum\limits_{i=1}^lc_if_{\zz,\la}^{s_i}-f_\HH\right|\right|_\rho.
\end{equation}
The problem of minimization (\ref{lfs_min}) is equivalent to the problem of finding $\textbf{c}=(c_1,\ldots,c_l)$,
$$H\textbf{c}=h,$$
where $H=(\langle f_{\zz,\la}^{s_i},f_{\zz,\la}^{s_j}\rangle_\rho)_{i,j=1}^l$ and $h=(\langle f_\HH,f_{\zz,\la}^{s_i}\rangle_\rho)_{i=1}^l$.

Due to the involvement of the unknown probability distribution of $\rho$ we cannot determine $H$ and $h$ directly. Therefore we approximate $H$ and $h$ by the quantities $\bar{H}=\left(\frac{1}{n}\sum\limits_{r=1}^nf_{\zz,\la}^{s_i}(x_r)f_{\zz,\la}^{s_j}(x_r)\right)_{i,j=1}^l$ and $\bar{h}=\left(\frac{1}{m}\sum\limits_{r=1}^my_rf_{\zz,\la}^{s_i}(x_r)\right)_{i=1}^l$, respectively.
Now the combination vector $\bar{\textbf{c}}$ is given by $\bar{\textbf{c}}=\bar{H}^{-1}\bar{h}$ and the aggregated solution $f_{\zz}=\sum\limits_{i=1}^l\bar{c}_i f_{\zz,\la}^{s_i},~\bar{\textbf{c}}=(\bar{c}_i)_{i=1}^l$ based on linear functional strategy in $\LL^2_{\rho_X}$-norm (LFS-$\LL^2_{\rho_X}$). The convergence rate of the constructed solution $f_\zz$ can be described as

\begin{theorem}\label{lfs.rates}
Let $\zz$ be i.i.d. samples drawn according to the probability measure $\rho$ with the hypothesis (\ref{Y.leq.M.2}). Then for sufficiently large sample size $m$ according to (\ref{suff_sample}) with the confidence $1-\eta$, we have
$$||f_{\zz}-f_\HH||_\rho = \min\limits_{c \in \RR^l}\left|\left|\sum\limits_{i=1}^lc_if_{\zz,\la}^{s_i}-f_\HH\right|\right|_\rho +\mathcal{O}\left(m^{-1/2}\log\left(\frac{4}{\eta}\right)\right)$$
holds with the probability $1-\eta$.
\end{theorem}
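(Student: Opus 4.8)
The plan is to compare the empirically constructed combination vector $\bar{\textbf{c}}=\bar H^{-1}\bar h$ against the ideal (but inaccessible) combination vector $\textbf{c}^*=H^{-1}h$, and to show that replacing $H,h$ by their empirical surrogates costs only an additional error of order $m^{-1/2}$. First I would observe that the ideal aggregated function $f^*:=\sum_{i=1}^l c_i^* f_{\zz,\la}^{s_i}$, with $\textbf{c}^*=H^{-1}h$, is exactly the $\LL^2_{\rho_X}$-orthogonal projection of $f_\HH$ onto $V:=\mathrm{span}\{f_{\zz,\la}^{s_i}:1\le i\le l\}$, so that $\|f^*-f_\HH\|_\rho=\min_{c\in\RR^l}\|\sum_i c_if_{\zz,\la}^{s_i}-f_\HH\|_\rho$. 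By the triangle inequality $\|f_{\zz}-f_\HH\|_\rho\le\|f^*-f_\HH\|_\rho+\|f_{\zz}-f^*\|_\rho$, so the whole theorem reduces to establishing $\|f_{\zz}-f^*\|_\rho=\mathcal{O}(m^{-1/2}\log(4/\eta))$.

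The key reduction is to rewrite all four Gram quantities through the integral operator. Using $K_x^*f=f(x)$ and $L_K=\int_X K_xK_x^*\,d\rho_X$ one has $\langle f,g\rangle_\rho=\langle f,L_Kg\rangle_\HH$, hence $H_{ij}=\langle f_{\zz,\la}^{s_i},L_Kf_{\zz,\la}^{s_j}\rangle_\HH$ and $h_i=\langle f_{\zz,\la}^{s_i},L_Kf_\HH\rangle_\HH$, while the empirical versions are $\bar H_{ij}=\langle f_{\zz,\la}^{s_i},S_n^*S_nf_{\zz,\la}^{s_j}\rangle_\HH$ and $\bar h_i=\langle f_{\zz,\la}^{s_i},S_{\xx}^*\yy\rangle_\HH$, where $S_n$ denotes the sampling operator over all $n$ points so that $S_n^*S_n=\frac{1}{n}\sum_{r=1}^n K_{x_r}K_{x_r}^*$. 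The benefit is that the perturbations $H-\bar H$ and $h-\bar h$ are now governed by the single operator deviation $\|S_n^*S_n-L_K\|_{\mathcal{L}(\HH)}$ and the vector deviation $\|S_{\xx}^*\yy-L_Kf_\HH\|_\HH$ sandwiched against the fixed finite family of approximants; this absorbs the awkward statistical dependence between the approximants and the very data defining $\bar H,\bar h$ into operator-norm estimates that hold uniformly over $\HH$.

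Next I would invoke the concentration bounds already in hand. The second estimate of Proposition \ref{main.bound} gives $\|S_n^*S_n-L_K\|_{\mathcal{L}(\HH)}=\mathcal{O}(n^{-1/2}\log(4/\eta))$, and since $n\ge m$ this is $\mathcal{O}(m^{-1/2}\log(4/\eta))$; splitting $S_{\xx}^*\yy-L_Kf_\HH=(S_{\xx}^*\yy-S_{\xx}^*S_{\xx}f_\HH)+(S_{\xx}^*S_{\xx}-L_K)f_\HH$ and applying both estimates of Proposition \ref{main.bound} with $\gamma=\kappa^2$ yields $\|S_{\xx}^*\yy-L_Kf_\HH\|_\HH=\mathcal{O}(m^{-1/2}\log(4/\eta))$. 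Together with the a priori bound $\max_i\|f_{\zz,\la}^{s_i}\|_\HH\le C$ — which follows from the explicit minimizer of Theorem \ref{optimizer}, the positivity of $\la_0$, and $\|S_{\xx}\|\le\kappa$ — these give $\|\bar H-H\|=\mathcal{O}(m^{-1/2}\log(4/\eta))$ and $\|\bar h-h\|=\mathcal{O}(m^{-1/2}\log(4/\eta))$. Finally, the resolvent identity $\bar{\textbf{c}}-\textbf{c}^*=\bar H^{-1}(\bar h-h)-\bar H^{-1}(\bar H-H)H^{-1}h$, combined with $\|f_{\zz}-f^*\|_\rho^2=(\bar{\textbf{c}}-\textbf{c}^*)^{\top}H(\bar{\textbf{c}}-\textbf{c}^*)\le\|H\|\,\|\bar{\textbf{c}}-\textbf{c}^*\|^2$, propagates the $m^{-1/2}$ rate to $\|f_{\zz}-f^*\|_\rho$ and closes the argument.

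I expect the main obstacle to be controlling $\|\bar H^{-1}\|$. The argument above treats $\|H^{-1}\|$ and the dimension $l$ as constants, but this first requires the ideal Gram matrix $H$ to be nonsingular, i.e.\ that the approximants $f_{\zz,\la}^{s_i}$ be linearly independent in $\LL^2_{\rho_X}$, and then that invertibility be transferred to the random matrix $\bar H$. For $m$ large enough the bound $\|\bar H-H\|=\mathcal{O}(m^{-1/2})$ keeps the smallest eigenvalue of $\bar H$ within a fixed factor of that of $H$, so $\|\bar H^{-1}\|\le 2\|H^{-1}\|$ on the same high-probability event; quantifying ``sufficiently large $m$'' against the (sample-dependent) conditioning of $H$ is where the delicacy lies, while every remaining step is a routine perturbation and concentration estimate once the operator-theoretic reduction of the second paragraph is in place.
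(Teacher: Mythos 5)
Your reconstruction follows the same route as the paper's proof: the paper gives no self-contained argument for Theorem \ref{lfs.rates} (it only states that the proof ``follows the same steps as of Theorem 10'' of Kriukova et al.), and those steps are exactly yours --- the normal-equation/projection characterization of the ideal aggregate $f^*$, the perturbations $\bar H\to H$, $\bar h\to h$ controlled by concentration, and an inversion-perturbation bound for $\bar{\mathbf{c}}-\mathbf{c}^*$. Your operator-theoretic rewriting $H_{ij}=\langle f_{\zz,\la}^{s_i},L_Kf_{\zz,\la}^{s_j}\rangle_\HH$, $\bar H_{ij}=\langle f_{\zz,\la}^{s_i},S_n^*S_nf_{\zz,\la}^{s_j}\rangle_\HH$, $\bar h_i=\langle f_{\zz,\la}^{s_i},S_{\xx}^*\yy\rangle_\HH$ is the right device for absorbing the dependence of the approximants on the sample, and Proposition \ref{main.bound} indeed supplies the needed concentration.

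However, two steps remain genuinely open, and the first is not a technicality in this setting. (i) The conditioning of $H$ and $\bar H$, which you flag but do not resolve, cannot be hidden in a constant here: the objects being aggregated are Nystr\"om approximants of the \emph{same} target, so in the intended regime (Theorem \ref{multi.err.upper.bound.p.para}) each $f_{\zz,\la}^{s_i}$ converges to $f_\HH$ in $\|\cdot\|_\rho$, hence for $l\geq 2$ the Gram matrix $H$ collapses onto the rank-one matrix $\|f_\HH\|_\rho^2\,\mathbf{1}\mathbf{1}^T$ and $\lambda_{\min}(H)\to 0$ as $m\to\infty$. Thus $\|H^{-1}\|$ and $\|\bar H^{-1}\|$ are random quantities that diverge precisely in the asymptotics the theorem addresses, and your final bound $\sqrt{\|H\|}\,\|\bar H^{-1}\|\left(1+\|H^{-1}\|\,\|h\|\right)\cdot\mathcal{O}(m^{-1/2})$ does not yield the claimed uniform $\mathcal{O}\left(m^{-1/2}\log\left(\frac{4}{\eta}\right)\right)$ remainder. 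A complete proof must either impose a quantitative linear-independence assumption on the approximants (this is effectively what the cited source does, its constants and its ``sufficiently large $m$'' threshold depending on the Gram matrix --- a dependence the paper's statement silently inherits), or work with a pseudo-/regularized inverse of $\bar H$ and bound $\|f_\zz-f^*\|_\rho^2=(\bar{\mathbf{c}}-\mathbf{c}^*)^T H(\bar{\mathbf{c}}-\mathbf{c}^*)$ so that large coefficient errors are paired with small eigendirections of $H$. (ii) Your a priori bound $\max_i\|f_{\zz,\la}^{s_i}\|_\HH\leq C$ is mis-justified: the resolvent bound from Theorem \ref{optimizer} gives only $\|f_{\zz,\la}^{s_i}\|_\HH\leq \kappa\left(\frac{1}{m}\sum_{r}\|y_r\|_Y\right)/\la_0$, where the $y_r$ are not almost surely bounded under (\ref{Y.leq.M.2}) and $1/\la_0$ is not a constant (under the parameter choices of Section 3, $\la_0=\Psi^{-1}(m^{-1/2})\to 0$). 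The boundedness you need is available, but via the high-probability sample-error estimates under (\ref{suff_sample}) (the $I_1,I_2$ bounds in the proof of Theorem \ref{psi.rates}), which is a different and less elementary argument than the one you invoke.
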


The proof of the theorem follows the same steps as of Theorem 10 \cite{Kriukova}. Here we observe that the individual Nystr{\"o}m approximants can be obtained for the particular values of combination vector. The error term in the Theorem \ref{lfs.rates} goes to $0$ for the large sample size. Moreover, the order of the error in Theorem \ref{lfs.rates} is less than the order of error of Nystr{\"o}m approximants in Theorem \ref{multi.err.upper.bound.p.para}. Therefore the error of aggregated solution in Theorem \ref{lfs.rates} is negligible.
\section{Numerical realization}
In our experiments, we demonstrate the performance of multi-penalty regularization based on Nyst{\"o}m type subsampling using the linear functional strategy for both scalar-valued functions and multi-task learning problems. We present the extensive empirical analysis of the multi-view manifold regularization scheme based on Nystr{\"o}m type subsampling for the challenging multi-class image classification and species recognition with attributes. Then we consider the NSL-KDD benchmark data set from UCI machine learning repository for the intrusion detection problem based on Nystr{\"o}m type subsampling.

\subsection{Caltech-101 data set}
We consider the multi-view manifold regularization scheme discussed in \cite{Minh} for multi-class classification on Caltech-101 data set. The regularized solution is constructed based on the different views of the input data. For the given data set, the views are the different features extracted from the input examples. Let $f=(f^1,\ldots,f^v)\in \HH=\HH_{K^1}\times\ldots\times\HH_{K^v}$ be the function associated to the $v$-views of the inputs. We define the combination operator $C:Y^v\to Y$ by $Cf(x)=\sum\limits_{i=1}^v c_if^i(x), \text{  for  } \textbf{c}=(c_1,\ldots,c_v)\in \RR^v$. We consider the multi-view manifold regularization for semi-supervised problem corresponding to the labeled data $\{(x_i,y_i)\}^m_{i=1}$ and unlabeled data $\{x_i\}_{i=m+1}^n$:
\begin{equation}\label{mv_mls}
f_{\zz,\la}=\mathop{\text{arg}\min}_{f \in \HH,\textbf{c}\in S_\alpha^{v-1}} \frac{1}{m}\sum\limits_{i=1}^m ||y_i-Cf(x_i)||_Y^2+\la_A||f||_{\HH}^2+\la_B||(S_{\xx'}^*M_BS_{\xx'})^{1/2}f||_{\HH}^2+\la_W||(S_{\xx'}^*M_WS_{\xx'})^{1/2}f||_{\HH}^2,
\end{equation}
where the regularization parameters $\la_A,\la_B,\la_W\geq 0$, $\xx'=(x_i)_{i=1}^n$, $M_B=I_n\otimes(M_v\otimes I_Y)$ and $M_W=L\otimes I_Y$ are symmetric, positive operators. Here $M_v=vI_v-\textbf{1}_v\textbf{1}_v^T$, $I_v$ is identity of size $v\times v$, $\textbf{1}_v$ is a vector of size $v\times 1$ with ones, $\otimes$ is the Kronecker product and $L$ is a graph Laplacian. The graph Laplacian $L$ is the block matrix of size $n\times n$, with block $(i,j)$ being the $v\times v$ diagonal matrix, given by
\begin{equation}\label{Laplacian}
L_{ij}=\text{diag}(L_{ij}^1,\ldots,L_{ij}^v),
\end{equation}
where the scalar graph Laplacian $L^i$ is induced by the symmetric, nonnegative weight matrix $W^i$.

The first term controls the complexity of the function in the ambient space, the second term between-view regularization which measures the consistency of the component functions across different views and the third term within-view regularization which measures the smoothness of the component functions in their corresponding views.

Now we consider the Nystr{\"o}m type subsampling on the multi-view learning problem. We restrict the minimization problem (\ref{mv_mls}) over the space:
\begin{equation*}
\HH^{\xx_s}:=\{f|f=\sum\limits_{i=1}^sK_{x_i}y_i,~\yy_s=(y_1,\ldots,y_s)\in Y^s\},
\end{equation*}
where $s\ll n$ and $\xx_s=(x_1,\ldots, x_s)$ is a subset of the input points in the training set and $K$ is the kernel corresponding to the RKHS $\HH=\HH_{K^1}\times\ldots\times\HH_{K^v}$. We construct Nystr{\"o}m approximants by minimizing the regularization problem (\ref{mv_mls}) over the space $\HH^{\xx_s}$.

We have to minimize the multi-view manifold regularization problem simultaneously for $f\in\HH^{\xx_s}$ and $\textbf{c}\in\RR^v$. So we first choose a weight vector $\textbf{c}$ on the sphere $S_{\alpha}^{v-1}=\{x\in \RR^v:||x||=\alpha\}$ and minimize for the function $f$ over $\HH^{\xx_s}$. Then we fix the estimator $f$ and optimize for the weight vector $\textbf{c}$. We continue the iterative process until we get the desired accuracy. The regularized solution of the scheme (\ref{mv_mls}) is constructed according to the Theorem 10 \cite{Minh}.

\begin{algorithm}[H]
\caption{Semi-supervised least-square regression and classification based on Nystr{\"o}m subsampling using the aggregation approach}\label{Algo.LFS}
\begin{algorithmic}
\State {\it This algorithm computes the multi-view learning estimators $f_{\zz,\la}^{s_r}$ corresponding to the views $(v_1,\ldots,v_q)$ for the subsampling $s_r$ $(1\leq r \leq l)$. Then the algorithm computes the aggregated solution $f_\zz$ from the regularized solutions $f_{\zz,\la}^{s_r}$ $(1\leq r\leq l)$ based on the linear functional strategy in $\LL^2_{\rho_X}$.}\\
{\bf Input:}
\vspace{-.2cm}
\begin{itemize}
\setlength\itemsep{-.2em}
\item[-] Training data $\{(x_i,y_i)\}_{i=1}^m\bigcup\{x_i\}_{i=m+1}^n$, with $m$ labeled and $n-m$ unlabeled examples.
\item[-] Testing data $t_i$.
\end{itemize}
\vspace{-.1cm}
{\bf Parameters:}
\vspace{-.2cm}
\begin{itemize}
\setlength\itemsep{-.2em}
\item[-] The regularization parameters $\la_A$, $\la_B$, $\la_W$.
\item[-] The weight vector $\textbf{c}$.
\item[-] The number of classes $P$.
\item[-] Scalar-valued kernels $K^i$ corresponding to $i$-th view.
\end{itemize}
\vspace{-.1cm}
{\bf Procedure:}
\vspace{-.2cm}
\begin{itemize}
\setlength\itemsep{-.2em}
\item[-] To calculate the set of estimators $f_{\zz,\la}^{s_r}$ $(1\leq r \leq l)$, compute kernel matrices $G_r[\xx',\xx^s]=\left(G_r(x_i,x_j)\right)_{ij}$ for $1\leq i \leq n$ and $1\leq j\leq s$ corresponding to views $(v_1^r,\ldots,v_q^r)$ according to (\ref{Kernel}).
\item[-] Compute graph Laplacian $L$ according to (\ref{Laplacian}).
\item[-] Compute $B_r=\left((J_m^n\otimes \mathbf{c}\mathbf{c}^T)+m\la_B(I_n\otimes M_v)+m\la_WL\right)G_r[\xx',\xx^s]$, where $J_m^n:\RR^n\to \RR^n$ is a diagonal matrix of size $n\times n$, with the first $m$ entries on the main diagonal being $1$ and the rest being $0$.
\item[-] Compute $C=\mathbf{c}^T\otimes I_P$ and $\mathbf{C}^*=I_{n\times m}\otimes C^*$ for $I_{n\times m}=[I_m,0_{m\times (n-m)}]^T$.
\item[-] Compute $Y_C$ such that $\textbf{C}^*\yy=\text{vec}(Y_C^T)$, where the vectorization of an $n\times P$ matrix $A$, denoted $\text{vec}(A)$, is the $nP\times 1$ column vector obtained by stacking the columns of the matrix $A$ on top of one another.
\item[-] Solve matrix equations $B_rA_r+m\la_AA_r=Y_C$ for $A_r$ $(1\leq r \leq l)$.
\item[-] Compute kernel matrices $G_r[\xx,\xx']=\left(G_r(x_i,x_j)\right)_{ij}$ for $1\leq i \leq m$ and $1\leq j\leq n$.
\item[-] Compute $\bar{H}_{rq}=\frac{1}{n}\text{vec}\left(A_r^TG_r[\xx']^T(I_{n\times m}\otimes \mathbf{c})\right)^T\text{vec}\left(A_q^TG_q[\xx']^T(I_{n\times m}\otimes \mathbf{c})\right)$ and  $\bar{h}_r=\frac{1}{m}\yy^T\text{vec}\left(A_r^TG_r[\xx,\xx']^T(I_{m}\otimes \mathbf{c})\right)$ for $1\leq r,q\leq l$.
\item[-] Compute $\bar{c}=(\bar{c}_1,\ldots,\bar{c}_l)^T=\bar{H}^{-1}\bar{h}$ for $\bar{H}=(\bar{H}_{rq})_{r,q=1}^l$ and $\bar{h}=(\bar{h}_r)_{r=1}^l$.
\item[-] Compute kernel matrices $G_r[t_i,\xx]$ between $t_i$ and $\xx$.
\item[-] Compute the value of estimator $f_\zz$ based on aggregation approach at $t_i$, i.e.,
\vspace{-.4cm}
$$f_{\zz}(t_i)=\sum\limits_{r=1}^l \bar{c}_r f_{\zz,\la}^{K_r}(t_i) =\sum\limits_{r=1}^l \bar{c}_r \text{vec}(A_r^TG_r[t_i,\xx]^T).$$
\end{itemize}
\vspace{-.2cm}
{\bf Output:} Multi-class classification: return index of $\max(C f_\zz(t_i))$.\\
\end{algorithmic}
\end{algorithm}

\begin{table}[h!]
\begin{center}
\begin{tabular}{l|lll}
  \hline
  Estimators & Level-1 views & Level-2 views & Level-3 views \\
  \hline
  MVL-NS (s=2) & 41.31\% & 42.55\% & 42.55\% \\
  MVL-NS (s=5) & 57.71\% & 62.90\% & 63.99\% \\
  LFS-MVL-NS   & 57.80\% & 62.85\% & 64.27\% \\
  MVL-LS-SP    & 57.32\% & 62.11\% & 63.68\% \\
  MVL-LS       & 57.67\% & 63.09\% & 64.20\% \\
  \hline
\end{tabular}
\caption{\small Performance of multi-view learning estimators based on Nystr{\"o}m subsampling and the aggregated solution based on LFS-$\LL^2_{\rho_X}$ using $5$ labeled and $5$ unlabeled images per class from Caltech-101 data set.}\label{Table.MVL.5.5}
\end{center}
\end{table}

\begin{table}[h!]
\begin{center}
\begin{tabular}{l|lll}
  \hline
  Estimators & Level-1 views & Level-2 views & Level-3 views \\
  \hline
  MVL-NS (s=3) & 53.31\% & 56.10\% & 56.82\% \\
  MVL-NS (s=10)& 64.42\% & 69.74\% & 71.09\% \\
  LFS-MVL-NS   & 64.64\% & 69.87\% & 71.09\% \\
  MVL-LS-SP    & 64.51\% & 69.30\% & 70.68\% \\
  MVL-LS       & 64.36\% & 69.54\% & 70.98\% \\
  \hline
\end{tabular}
\caption{\small Performance of multi-view learning estimators based on Nystr{\"o}m subsampling and the aggregated solution based on LFS-$\LL^2_{\rho_X}$ using $10$ labeled and $5$ unlabeled images per class from Caltech-101 data set.}\label{Table.MVL.10.5}
\end{center}
\end{table}

\begin{table}[h!]
\begin{center}
\begin{tabular}{l|lll}
  \hline
  Estimators & Level-1 views & Level-2 views & Level-3 views \\
  \hline
  MVL-NS-optC (s=2) & 35.90\% & 35.64\% & 34.90\% \\
  MVL-NS-optC (s=5) & 61.72\% & 64.29\% & 64.77\% \\
  LFS-MVL-NS-optC   & 61.81\% & 64.40\% & 64.90\% \\
  MVL-LS-optC       & 60.33\% & 64.62\% & 65.16\% \\
  \hline
\end{tabular}
\caption{\small Performance of multi-view learning estimators based on Nystr{\"o}m subsampling and the aggregated solution based on LFS-$\LL^2_{\rho_X}$ with optimal combination operator using $5$ labeled and $5$ unlabeled images per class from Caltech-101 data set.}\label{Table.MVL.5.5.optC}
\end{center}
\end{table}

\begin{table}[h!]
\begin{center}
\begin{tabular}{l|lll}
  \hline
  Estimators & Level-1 views & Level-2 views & Level-3 views \\
  \hline
  MVL-NS-optC (s=3) & 45.93\% & 47.23\% & 48.54\% \\
  MVL-NS-optC (s=10)& 67.06\% & 70.09\% & 71.57\% \\
  LFS-MVL-NS-optC   & 67.15\% & 70.24\% & 71.61\% \\
  MVL-LS-optC       & 65.90\% & 70.39\% & 71.20\% \\
  \hline
\end{tabular}
\caption{\small Performance of multi-view learning estimators based on Nystr{\"o}m subsampling and the aggregated solution based on LFS-$\LL^2_{\rho_X}$ with optimal combination operator using $10$ labeled and $5$ unlabeled images per class from Caltech-101 data set.}\label{Table.MVL.10.5.optC}
\end{center}
\end{table}

We demonstrate the performance of multi-view manifold regularization problem (\ref{mv_mls}) based on Nystr{\"o}m type subsampling on Caltech-101 data set for the challenging task of multi-class image classification. Caltech-101 data set is used for object recognition problem which is provided in Fei-Fei et al. \cite{Fei}. The data set contains $P=102$ classes of objects and each class having 40 to 800 images. We have chosen 15 images randomly from each class. PHOW gray, PHOW color, geometric blurred and self-symmetry are the four features considered by Vedaldi et al. \cite{Vedaldi} which are extracted on three levels of the spatial pyramid. We use the $\chi^2$-kernels corresponding to each view $x^i$ of the input $x=(x^1,\ldots,x^v)$ provided in \cite{Vedaldi}. We define the operator-valued kernel for the manifold learning scheme (\ref{mv_mls}):
\vspace{-.4cm}
\begin{equation}\label{Kernel}
K(x,t)=G(x,t)\otimes I_P\text{  for } G(x,t)=\sum\limits_{i=1}^vK^i(x^i,t^i)e_ie_i^T,
\vspace{-.4cm}
\end{equation}
where $K^i$ is scalar-valued kernel defined on $i$-th view and $e_i=(0,\ldots,1,\ldots,0)\in \RR^v$ is the $i$-th coordinate vector. Three splits of the given data are considered and the results are reported in terms of accuracy by averaging over all three splits. The test set contains 15 objects per class in all experiments. We set $y=(-1,\ldots,1,\ldots,-1)\in Y=\RR^P$, i.e., 1 on the $k$-th place otherwise $-1$ if $x$ belongs to the $k$-th class for the output values.

Here we take the number of labeled data per class $l = \{5, 10\}$ and the number of unlabeled data per class $u=5$. In the results of Table \ref{Table.MVL.5.5}, \ref{Table.MVL.10.5}, \ref{Table.MVL.5.5.optC} $\&$ \ref{Table.MVL.10.5.optC}, we use the same choice of regularization parameters $\la_A=10^{-5}$, $\la_B=10^{-6}$, $\la_W=10^{-6}$ as considered in Minh et al. \cite{Minh}. We have chosen the uniform combination operator $\textbf{c}=\frac{1}{v}(1,\ldots,1)^T\in\RR^v$ in the results of Table \ref{Table.MVL.5.5} $\&$ \ref{Table.MVL.10.5}. MVL-NS represents the multi-view estimator based on Nystr{\"o}m subsampling with subsampling size $s$. Level-1 views represents the upper level of views, i.e., PHOW color and gray $\ell0$, SSIM $\ell0$, GB. Similarly, Level-2 views and Level-3 views represent the middle and lower level of feature's spatial pyramid. LFS-MVL-NS is the aggregated estimator based on LFS-$\LL^2_{\rho_X}$ using Nystr{\"o}m approximants (MVL-NS). MVL-SP is the single-penalty multi-view learning estimator corresponding to the problem (\ref{mv_mls}), i.e. for $\la_B=0$ \& $\la_W=0$.

We present the performance of multi-view estimators based on Nystr{\"o}m type subsampling using the feature on different levels of spatial pyramid. In Table \ref{Table.MVL.5.5}, \ref{Table.MVL.10.5}, we observe that on the aggregating the Nystr{\"o}m approximants based on linear functional strategy we are able to achieve the accuracy of multi-view learning estimator (MVL-LS) of the problem (\ref{mv_mls}). We also show the accuracy of the single-penalty regularizer (MVL-LS-SP). In Table \ref{Table.MVL.5.5.optC}, \ref{Table.MVL.10.5.optC}, we go one step further by choosing the optimal combination operator $\mathbf{c}$. We optimize the combination operator over the sphere $S_\alpha^{v-1}$ for the fixed function $f$. To obtain the optimal weight vector $\textbf{c}$, we created a validation set by selecting 5 examples for each class from the training set. The validation set is used to determine the best value of $\textbf{c}$ found over all the iterations using different initializations. We iterate 25 times in the optimization procedure of $\mathbf{c}$. We fixed $||\alpha|| = 1$ in the optimization problem of combination operator. The optimal choice of combination operator is powerful with clear improvements in classification accuracies over the uniform weight approach.

The results demonstrate that the Nystr{\"o}m type subsamling can be effectively used in order to reduce the complexity of kernel methods. The aggregation approach automatically produces the best approximant. In all the cases shown in the tables, we obtain better results by the aggregation approach compared to the Nystr{\"o}m approximants.

\subsection{NSL-KDD data set}
We consider the NSL-KDD benchmark data set \cite{NSL-KDD} from UCI machine learning repository for an empirical analysis of Nystr{\"o}m type subsampling. NSL-KDD data set is the refined version of KDD Cup 99 data set which is used in the $3^{rd}$ International Knowledge Discovery and Data Mining Tools Competition for intrusion detection. The traning set of NSL-KDD data set does not include redundant records and contains reasonable number of records to run the experiments on the complete set. The NSL-KDD data contains 41 attributes and 5 classes that are normal and 4 categories of attacks: DoS, Probe, R2L and U2R. Denial of Service Attack (DoS) is an attack in which the attacker attempts to block system or network resources and services. In Probing Attack, the attacker attempts to gain the information about potential vulnerabilities of a network of computers for the apparent purpose of circumventing its security controls. User to Root Attack (U2R) is a class of exploit in which the attacker access the system as a normal user and break the vulnerabilities to gain administrative privileges. Remote to Local Attack (R2L) occurs when an attacker who has unauthorized ability to dump data packets to remote system over network and exploits some vulnerability to gain access either as a user or root to do their unauthorized activity.

The features in NSL-KDD data set can be classified into three groups: (a) {\bf the basic input features} encapsulate all the attributes that can be extracted from a TCP/IP connection. It includes some flags in TCP connections, duration, prototype, number of bytes from source IP addresses or from destination IP addresses and service, (b) {\bf the content input features} use domain knowledge to assess the payload of the original TCP packets. It includes features such as the number of failed login attempts and (c) {\bf the statistical input features} that are determined either by a time window or a window of certain kind of connections. It examines only the connections in the past 2 seconds that have the same destination host or service as the current connection.

In our experiment, we used first $25000$ patterns from ``20 Percent Training Set" available on \cite{NSL-KDD}. NSL-KDD data set contains numeric and nonnumeric attributes. First, we convert the nonnumeric attributes: protocol\_type, service and flag as numeric attributes.  We represent the corresponding values of individual strings ``tcp", ``udp", ``icmp" for protocol name as $0, 1, 2,$ respectively . All the remaining attributes are also represented according to the above encoding system. We removed the two columns of attribute which contain only zero values. The attributes of the input data are normalized to the interval $[0,1]$ (see \cite{Alom}).

We consider a multi-penalty regularization scheme based on Nystr{\"o}m type subsampling which can be viewed as a special case of proposed problem (\ref{multi.plty.funcl1}),
$$\mathop{\text{arg}\min}_{f \in \HH^{\xx_s}} \left\{\frac{1}{m}\sum\limits_{i=1}^m(f(x_i)-y_i)^2+\la_0||f||_\HH^2+\la_1||(S_{\xx}^*LS_{\xx})^{1/2}f||_\HH^2\right\},$$
where the Laplacian $L=D-W$ with $W=(\omega_{ij})$ is a weight matrix with non-negative entries and $D$ is a diagonal matrix with $D_{ii}=\sum\limits_{j=1}^m \omega_{ij}$. We apply this scheme on NSL-KDD data set for binary classification among attacks and normal situations. We demonstrate the performance of single-penalty regularization ($\la_1=0$) versus multi-penalty regularization and also describe the efficiency of proposed regularization algorithm statistically using the standard error measures. All regularized solutions appearing in this experiment are constructed in the reproducing kernel Hilbert space corresponding to the Gaussian kernel $K(x_i,x_j)=\exp(-\gamma||x_i-x_j||^2)$ with the exponential weights $\omega_{ij}=\exp(-||x_i-x_j||^2/{4b})$, for some $b,\gamma>0$. We choose the regularization parameters for single-penalty regularization according to the balancing principle \cite{DeVito} and for multi-penalty regularization according to the balanced-discrepancy principle \cite{Abhishake}.

The performance of the proposed approach is evaluated using various performance measures. To measure the performance of learning classifiers we need to know the terms: True Positive TP (the number of correctly classified positive instances), False Negative FN (the number of misclassified positive instances), False Positive FP (the number of misclassified negative instances) and True Negative TN (the number of correctly classified negative instances). Standard performance measures: Classification accuracy, Precision, Sensitivity, Specificity and F-measure can be defined as:
$$\text{Classification accuracy} =\frac{\text{TP+TN}}{\text{TP+TN+FP+FN}},$$
$$\text{Precision} =\frac{\text{TP}}{\text{TP+FP}},$$
$$\text{Sensitivity (True positive rate)} =\frac{\text{TP}}{\text{TP+FN}},$$
$$\text{Specificity (True negative rate)} =\frac{\text{TN}}{\text{FP+TN}}$$
and
$$\text{F-measure} =\frac{2\times \text{Precision}\times \text{Sensitivity}}{\text{Precision}+\text{Sensitivity}} =\frac{\text{2TP}}{\text{2TP+FN+FP}}$$

We use the 10-fold cross validation which divides $25000$ patterns of the considered data set into 10 sub-data sets of size $2500$. We train our algorithm on first 9 sub-data sets and test on the last sub-data set. In our experiment, we illustrate the performance of multi-penalty regularized solutions $f_{\zz,\la}^{10}$, $f_{\zz,\la}^{50}$, $f_{\zz,\la}^{250}$ corresponding to subsampling size $s=10, 50, 250$ and their aggregated solution $f_{\zz}$ for $|\zz|=2500$. We also compare the performance of these estimators with single-penalty regularized solution $f_{\zz,\la_0}$ and multi-penalty regularized solution $f_{\zz,\la}$. In the context of binary classification, the output $y_i$ takes only two values, designated by $1$ for attack and $-1$ for normal. For the regularized solutions we consider the decision rule/classisfier $\{y=1 \text{ for } f(x)\geq0 \text{ and } y=-1 \text{ for } f(x)<0\}$ in discriminating the elements x of two classes. In the experiments, the initial parameters are $\la_0=10^{-8},~\la_1=1$, the kernel parameter $\gamma=4\times 10^{-2}$ and the weight parameter $b=10^{-3}$.

\begin{table}[h!]
\begin{center}
\begin{tabular}{| m{1.5em} | m{1cm}| m{1cm} | m{1cm} | m{1cm}| m{1cm} | m{1cm} | m{1cm}| m{1cm} | m{1cm} |}
\hline
\begin{turn}{+90}Estimators~~~\end{turn}& Fold1 (\%)        & Fold2 (\%)        & Fold3 (\%)         & Fold4 (\%)         & Fold5 (\%)         & Fold6 (\%)         & Fold7 (\%)       & Fold8 (\%)       & Fold9 (\%)\\
\hline
$f_{\zz,\la_0}$    & \text{ }97.80          & \text{ }83.48         & \text{ }98.56         & \text{ }98.56          & \text{ }98.44         & \text{ }98.40          & \text{ }80.20         & \text{ }98.64         & \text{ }98.12 \\
\hline
$f_{\zz,\la}$      & \text{ }98.96          & \text{ }98.64          & \text{ }98.48          & \text{ }98.48          & \text{ }98.36          & \text{ }98.44          & \text{ }98.84          & \text{ }98.84          & \text{ }98.08 \\
\hline
$f_{\zz,\la}^{10}$ & \text{ }92.52 (0.16) & \text{ }92.48 (0.18) & \text{ }92.57 (0.20) & \text{ }92.64 (0.20) & \text{ }92.47 (0.15) & \text{ }92.47 (0.20) & \text{ }92.65 (0.15) & \text{ }92.06 (0.23) & \text{ }92.76 (0.17) \\
\hline
$f_{\zz,\la}^{50}$ & \text{ }95.79 (0.07) & \text{ }96.45 (0.05) & \text{ }96.19 (0.06) & \text{ }96.48 (0.07) & \text{ }95.88 (0.06) & \text{ }95.92 (0.06) & \text{ }96.17 (0.08) & \text{ }96.15 (0.07) & \text{ }95.84 (0.07) \\
\hline
$f_{\zz,\la}^{250}$& \text{ }98.33 (0.02) & \text{ }98.31 (0.03) & \text{ }98.03 (0.02) & \text{ }98.36 (0.02) & \text{ }98.41 (0.03) & \text{ }98.18 (0.03) & \text{ }98.58 (0.02) & \text{ }98.06 (0.03) & \text{ }98.56 (0.03) \\
\hline
$f_{\zz}$          & \text{ }98.33 (0.02) & \text{ }98.32 (0.03) & \text{ }98.03 (0.02) & \text{ }98.37 (0.02) & \text{ }98.42 (0.02) & \text{ }98.19 (0.03) & \text{ }98.60 (0.02) & \text{ }98.06 (0.03) & \text{ }98.57 (0.02) \\
\hline
\end{tabular}
\caption{\small Statistical performance of various estimators on different sub-data sets (Folds) of NSL-KDD data set using random subsampling 50 times.} \label{Table.NSL-KDD.rand}
\end{center}
\end{table}

\begin{table}[h!]
\begin{center}
\begin{tabular}{ | m{1.5em} | m{1.45cm}| m{1.55cm} | m{1.7cm} | m{1.7cm}| m{1.9cm} | m{2.9cm} |}
  \hline
\begin{turn}{+90}Estimators~~~\end{turn}& Accuracy \text{    }(\%) & Precision & Sensitivity & Specificity & F-measure & Parameter choice \\
  \hline
  $f_{\zz,\la_0}$      & \text{ }94.32 (3.79) & \text{ }0.93314 (0.04480) & \text{ }0.98539 (0.00330) & \text{ }0.90603 (0.07239) & \text{ }0.95220 (0.02842) & $\la_0=5.52\times 10^{-7}$ \\
  \hline
  $f_{\zz,\la}$        & \text{ }98.56 (0.10) & \text{ }0.98100 (0.00163) & \text{ }0.98852 (0.00121) & \text{ }0.98311 (0.00148) & \text{ }0.98474 (0.00105) & $\la_0=5.52\times 10^{-7}$ $\la_1=4.31\times 10^{-3}$ \\
  \hline
  $f_{\zz,\la}^{10}$   & \text{ }92.29 (0.30) & \text{ }0.93164 (0.00279) &  \text{ }0.90179 (0.00734) & \text{ }0.94156 (0.00276) & \text{ }0.91628 (0.00358) & $\la_0=1.69\times 10^{-8}$ $\la_1=5.71\times 10^{-6}$ \\
  \hline
  $f_{\zz,\la}^{50}$   & \text{ }96.29 (0.12) & \text{ }0.97816 (0.00132) & \text{ }0.94193 (0.00318) & \text{ }0.98144 (0.00118) & \text{ }0.95967 (0.00140) & $\la_0=2.25\times 10^{-7}$ $\la_1=5.66\times 10^{-5}$ \\
  \hline
  $f_{\zz,\la}^{250}$  & \text{ }98.33 (0.09) & \text{ }0.98171 (0.00085) & \text{ }0.98273 (0.00135) & \text{ }0.98386 (0.00075) & \text{ }0.98222 (0.00098) & $\la_0=8.42\times 10^{-8}$ $\la_1=4.72\times 10^{-4}$ \\
  \hline
  $f_{\zz}$            & \text{ }98.33 (0.09) & \text{ }0.98171 (0.00085) & \text{ }0.98273 (0.00135) & \text{ }0.98386 (0.00075) & \text{ }0.98222 (0.00098) & \\
  \hline
\end{tabular}
\caption{\small Statistical performance of various estimators over all 9 sub-data sets (Folds) of NSL-KDD data set.}\label{Table.NSL-KDD.whole}
\end{center}
\end{table}

\begin{table}[h!]
\begin{center}
\begin{tabular}{| m{32em} | m{1.3cm}|}
  \hline
  Hybrid Naive based classifier using 2500 (approximately) patterns \cite{Farid} & 82.39\% \\
  \hline
  DBN-SVM on normalized encoded data set using 5000 patterns \cite{Alom} & 96.90\% \\
  \hline
  Cross-breed type Bayesian network on 2500 patterns \cite{Onik} & 97.27\% \\
  \hline
  Multi-penalty (Manifold) regularization algorithm on 2500 patterns \cite{Abhishake} & {\bf 98.56\%} \\
  \hline
  Proposed multi-penalty kernel method based on Nystr{\"o}m type subsampling on 2500 patterns & {\bf 98.33\%} \\
  \hline
\end{tabular}
\caption{\small Performance comparison of the proposed model with some existing research work on NSL-KDD data set.}\label{Table.NSL-KDD.comparision}
\end{center}
\end{table}

Note that the performance of the Nystr{\"o}m type subsampling depends not only on the size $s$ of a subsampling set but also on the sub-data set $\{(x_i,y_i)\}_{i=1}^s$. To demonstrate reliability of Nystr{\"o}m type subsampling we generate 50 times subsamples of size $s=10, 50, 250$ from 9 sub-data sets (Folds) and report the mean and standard deviation of performance accuracy over 9 sub-data sets in Table \ref{Table.NSL-KDD.rand}. We observe that the performance of the single-penalty regularization varies over the different folds. On the other hand, multi-penalty regularized solution performs consistently. The accuracy of the multi-penalty regularized solutions varies over the subsampling size $s$ but the aggregated solution based on linear functional strategy almost provides the better solution among the Nystr{\"o}m approximants. Here we also note that the performance of the Nystr{\"o}m approximants does not vary much over the different random subsampling of same size. In Table \ref{Table.NSL-KDD.whole}, we report the results in terms of mean and standard deviation of the performance measures over nine sub-data sets with the balanced-discrepancy parameter choice.

In Table \ref{Table.NSL-KDD.comparision}, we compare the performance of proposed approach with existing approaches applied on NSL-KDD data set for intrusion detection. The multi-penalty regularization scheme illustrates the better performance than the benchmark result achieved in \cite{Onik}. Moreover, the proposed multi-penalty kernel method based on Nystr{\"o}m type subsampling is able to achieve the more accurate results. The reported results of the experiments demonstrate that the aggregation approach automatically uses the best Nystr{\"o}m approximant and achieves the accuracy of standard multi-penalty regularization scheme. The results shows that the Nystr{\"o}m subsampling can yield very good performance while substantially reducing the computational requirements. So the approach can be efficiently applied as a reliable strategy when dealing with the data of big size.

\bibliography{Bib_file}

\begin{thebibliography}{10}

\bibitem{Abhishake}
Abhishake and S.~Sivananthan.
\newblock Multi-penalty regularization in learning theory.
\newblock {\em J. Complexity}, 36:141--165, 2016.

\bibitem{Allenby}
G.~M. Allenby and P.~E. Rossi.
\newblock Marketing models of consumer heterogeneity.
\newblock {\em J. Econom.}, 89(1):57--78, 1998.

\bibitem{Alom}
M.~Z. Alom, V.~R. Bontupalli, and T.~M. Taha.
\newblock Intrusion detection using deep belief networks.
\newblock In {\em National Aerospace and Electronics Conference (NAECON)},
  pages 339--344. IEEE, 2015.

\bibitem{Anderssen}
R.~S. Anderssen.
\newblock The linear functional strategy for improperly posed problems.
\newblock In {\em J. Cannon, U. Hornung, eds., Inverse Problems,
  Birkh{\"a}user, Basel}, pages 11--30. Springer, 1986.

\bibitem{Ando}
R.~K. Ando and T.~Zhang.
\newblock A framework for learning predictive structures from multiple tasks
  and unlabeled data.
\newblock {\em J. Mach. Learn. Res.}, 6:1817--1853, 2005.

\bibitem{Argyriou}
A.~Argyriou, T.~Evgeniou, and M.~Pontil.
\newblock Convex multi-task feature learning.
\newblock {\em Machine Learning}, 73(3):243--272, 2008.

\bibitem{Arora}
N.~Arora, G.~M. Allenby, and J.~L. Ginter.
\newblock A hierarchical bayes model of primary and secondary demand.
\newblock {\em Marketing Science}, 17(1):29--44, 1998.

\bibitem{Bach13}
F.~R. Bach.
\newblock Sharp analysis of low-rank kernel matrix approximations.
\newblock In {\em Conference on Learning Theory}, volume~30, pages 185--209,
  2013.

\bibitem{Bach}
F.~R. Bach, G.~R. Lanckriet, and M.~I. Jordan.
\newblock Multiple kernel learning, conic duality, and the {SMO} algorithm.
\newblock In {\em Proceedings of the Twenty-first International Conference on
  Machine Learning}. ACM, 2004.

\bibitem{Bakker}
B.~Bakker and T.~Heskes.
\newblock Task clustering and gating for bayesian multitask learning.
\newblock {\em J. Mach. Learn. Res.}, 4:83--99, 2003.

\bibitem{Bauer2}
F.~Bauer, P.~Math{\'e}, and S.~Pereverzev.
\newblock Local solutions to inverse problems in geodesy.
\newblock {\em J. Geodesy}, 81(1):39--51, 2007.

\bibitem{Bauer}
F.~Bauer, S.~Pereverzev, and L.~Rosasco.
\newblock On regularization algorithms in learning theory.
\newblock {\em J. Complexity}, 23(1):52--72, 2007.

\bibitem{Baxter}
J.~Baxter.
\newblock A model of inductive bias learning.
\newblock {\em J. Artif. Intell. Res.(JAIR)}, 12(149-198):3, 2000.

\bibitem{Belkin}
M.~Belkin, P.~Niyogi, and V.~Sindhwani.
\newblock Manifold regularization: A geometric framework for learning from
  labeled and unlabeled examples.
\newblock {\em J. Mach. Learn. Res.}, 7:2399--2434, 2006.

\bibitem{Ben}
S.~Ben-David and R.~Schuller.
\newblock Exploiting task relatedness for multiple task learning.
\newblock In {\em Learning Theory and Kernel Machines}, pages 567--580.
  Springer, 2003.

\bibitem{Blanchard1}
G.~Blanchard and P.~Math{\'e}.
\newblock Discrepancy principle for statistical inverse problems with
  application to conjugate gradient iteration.
\newblock {\em Inverse problems}, 28(11):115011, 2012.

\bibitem{Blanchard16}
G.~Blanchard and N.~M{\"u}cke.
\newblock Kernel regression, minimax rates and effective dimensionality: Beyond
  the regular case.
\newblock {\em arXiv:1611.03979}, 2016.

\bibitem{Brefeld}
U.~Brefeld, T.~G{\"a}rtner, T.~Scheffer, and S.~Wrobel.
\newblock Efficient co-regularised least squares regression.
\newblock In {\em Proceedings of the 23rd International Conference on Machine
  Learning}, pages 137--144. ACM, 2006.

\bibitem{Bucak}
S.~S. Bucak, R.~Jin, and A.~K. Jain.
\newblock Multiple kernel learning for visual object recognition: A review.
\newblock {\em IEEE Trans. Pattern Anal. Mach. Intell.}, 36(7):1354--1369,
  2014.

\bibitem{Caponnetto}
A.~Caponnetto and E.~De~Vito.
\newblock Optimal rates for the regularized least-squares algorithm.
\newblock {\em Found. Comput. Math.}, 7(3):331--368, 2007.

\bibitem{JChen}
J.~Chen, S.~Pereverzyev~Jr, and Y.~Xu.
\newblock Aggregation of regularized solutions from multiple observation
  models.
\newblock {\em Inverse Problems}, 31(7):075005, 2015.

\bibitem{Ciliberto1}
C.~Ciliberto, Y.~Mroueh, T.~Poggio, and L.~Rosasco.
\newblock Convex learning of multiple tasks and their structure.
\newblock In {\em International Conference on Machine Learning}, pages
  1548--1557, 2015.

\bibitem{Ciliberto}
C.~Ciliberto, L.~Rosasco, and S.~Villa.
\newblock Learning multiple visual tasks while discovering their structure.
\newblock In {\em Proceedings of the IEEE Conference on Computer Vision and
  Pattern Recognition}, pages 131--139, 2015.

\bibitem{CuckerSmale}
F.~Cucker and S.~Smale.
\newblock On the mathematical foundations of learning.
\newblock {\em Bull. Amer. Math. Soc. (NS)}, 39(1):1--49, 2002.

\bibitem{Cucker}
F.~Cucker and D.~X. Zhou.
\newblock {\em Learning theory: An approximation theory viewpoint}, volume~24.
\newblock Cambridge Monographs on Applied and Computational Mathematics,
  Cambridge University Press, Cambridge, UK, 2007.

\bibitem{DeVito}
E.~De~Vito, S.~Pereverzyev, and L.~Rosasco.
\newblock Adaptive kernel methods using the balancing principle.
\newblock {\em Found. Comput. Math.}, 10(4):455--479, 2010.

\bibitem{DeVito0}
E.~De~Vito, L.~Rosasco, A.~Caponnetto, U.~De~Giovannini, and F.~Odone.
\newblock Learning from examples as an inverse problem.
\newblock {\em J. Mach. Learn. Res.}, 6:883--904, 2005.

\bibitem{Devroye}
L.~Devroye, L.~Gy{\"o}rfi, and G.~Lugosi.
\newblock {\em A probabilistic theory of pattern recognition}, volume 31,
  Applications of Mathematics.
\newblock Springer, New York, 1996.

\bibitem{Evgeniou1}
T.~Evgeniou, C.~A. Micchelli, and M.~Poggio.
\newblock Learning multiple tasks with kernel methods.
\newblock {\em J. Mach. Learn. Res.}, 6:615--637, 2005.

\bibitem{Evgeniou07}
T.~Evgeniou, M.~Pontil, and O.~Toubia.
\newblock A convex optimization approach to modeling consumer heterogeneity in
  conjoint estimation.
\newblock {\em Marketing Science}, 26(6):805--818, 2007.

\bibitem{Farid}
D.~M. Farid, L.~Zhang, C.~M. Rahman, M.~A. Hossain, and R.~Strachan.
\newblock Hybrid decision tree and na{\"\i}ve {B}ayes classifiers for
  multi-class classification tasks.
\newblock {\em Expert Syst. Appl.}, 41(4):1937--1946, 2014.

\bibitem{Fei}
L.~Fei-Fei, R.~Fergus, and P.~Perona.
\newblock One-shot learning of object categories.
\newblock {\em IEEE Trans. Pattern Anal. Mach. Intell.}, 28(4):594--611, 2006.

\bibitem{Goldenshluger}
A.~Goldenshluger and S.~V. Pereverzev.
\newblock Adaptive estimation of linear functionals in {H}ilbert scales from
  indirect white noise observations.
\newblock {\em Probab. Theory Related Fields}, 118:169--186, 2000.

\bibitem{Greene}
W.~H. Greene.
\newblock {\em Econometric analysis}.
\newblock Englewood Cliffs, NJ: Prentice-Hall, 2002.

\bibitem{Guo17}
Z.~C. Guo, S.~B. Lin, and L.~Shi.
\newblock Distributed learning with multi-penalty regularization.
\newblock {\em Accepted in Appl. Comput. Harmon. Anal.}, 2017.

\bibitem{Guo}
Z.~C. Guo, S.~B. Lin, and D.~X. Zhou.
\newblock Learning theory of distributed spectral algorithms.
\newblock {\em Inverse Problems}, 33:074009, 2017.

\bibitem{Jebara}
T.~Jebara.
\newblock Multi-task feature and kernel selection for {SVM}s.
\newblock In {\em Proceedings of the Twenty-first International Conference on
  Machine Learning}, page~55. ACM, 2004.

\bibitem{Kriukova}
G.~Kriukova, O.~Panasiuk, S.~V. Pereverzyev, and P.~Tkachenko.
\newblock A linear functional strategy for regularized ranking.
\newblock {\em Neural Networks}, 73:26--35, 2016.

\bibitem{Kriukova16}
G.~Kriukova, S.~Pereverzyev~Jr, and P.~Tkachenko.
\newblock Nystr{\"o}m type subsampling analyzed as a regularized projection.
\newblock {\em Inverse Problems}, 33(7):074001, 2017.

\bibitem{Lin}
K.~Lin, S.~Lu, and P.~Math{\'e}.
\newblock Oracle-type posterior contraction rates in {B}ayesian inverse
  problems.
\newblock {\em Inverse Probl. Imaging}, 9(3):895--915, 2015.

\bibitem{Lu16}
S.~Lu, P.~Math\'{e}, and S.~Pereverzyev.
\newblock Balancing principle in supervised learning for a general
  regularization scheme.
\newblock {\em RICAM-Report}, 38, 2016.

\bibitem{Luo13}
Y.~Luo, D.~Tao, C.~Xu, D.~Li, and C.~Xu.
\newblock Vector-valued multi-view semi-supervsed learning for multi-label
  image classification.
\newblock In {\em Proceedings of the AAAI Conference on Artificial
  Intelligence}, pages 647--653, 2013.

\bibitem{Luo}
Y.~Luo, D.~Tao, C.~Xu, C.~Xu, H.~Liu, and Y.~Wen.
\newblock Multiview vector-valued manifold regularization for multilabel image
  classification.
\newblock {\em IEEE Trans. Neural Netw. Learn. Syst.}, 24(5):709--722, 2013.

\bibitem{Mathe0}
P.~Math{\'e} and S.~V. Pereverzev.
\newblock Direct estimation of linear functionals from indirect noisy
  observations.
\newblock {\em J. Complexity}, 18(2):500--516, 2002.

\bibitem{Mathe2003}
P.~Math{\'e} and S.~V. Pereverzev.
\newblock Discretization strategy for linear ill-posed problems in variable
  {H}ilbert scales.
\newblock {\em Inverse Problems}, 19(6):1263, 2003.

\bibitem{Mathe}
P.~Math\'{e} and S.~V. Pereverzev.
\newblock Geometry of linear ill-posed problems in variable {H}ilbert scales.
\newblock {\em Inverse problems}, 19(3):789--803, 2003.

\bibitem{Micchelli}
C.~A. Micchelli and M.~Pontil.
\newblock Learning the kernel function via regularization.
\newblock {\em J. Mach. Learn. Res.}, 6(2):1099--1125, 2005.

\bibitem{Micchelli1}
C.~A. Micchelli and M.~Pontil.
\newblock On learning vector-valued functions.
\newblock {\em Neural Computation}, 17(1):177--204, 2005.

\bibitem{Minh}
H.~Q. Minh, L.~Bazzani, and V.~Murino.
\newblock A unifying framework in vector-valued reproducing kernel {H}ilbert
  spaces for manifold regularization and co-regularized multi-view learning.
\newblock {\em J. Mach. Learn. Res.}, 17(25):1--72, 2016.

\bibitem{Minh11}
H.~Q. Minh and V.~Sindhwani.
\newblock Vector-valued manifold regularization.
\newblock In {\em International Conference on Machine Learning}, pages 57--64,
  2011.

\bibitem{Onik}
A.~R. Onik, N.~F. Haq, and W.~Mustahin.
\newblock Cross-breed type {B}ayesian network based intrusion detection system.
\newblock In {\em 18th International Conference on Computer and Information
  Technology (ICCIT)}, pages 407--412. IEEE, 2015.

\bibitem{Pereverzyev}
S.~V. Pereverzyev and P.~Tkachenko.
\newblock Regularization by the linear functional strategy with multiple
  kernels.
\newblock {\em Front. Appl. Math. Stat.}, 3:1, 2017.

\bibitem{Pinelis}
I.~F. Pinelis and A.~I. Sakhanenko.
\newblock Remarks on inequalities for the probabilities of large deviations.
\newblock {\em Theory probab. Appl.}, 30(1):127--131, 1985.

\bibitem{Rastogi}
A.~Rastogi and S.~Sivananthan.
\newblock Optimal rates for the regularized learning algorithms under general
  source condition.
\newblock {\em Front. Appl. Math. Stat.}, 3:3, 2017.

\bibitem{Rastogi17}
Abhishake Rastogi and Sivananthan Sampath.
\newblock Multi-task learning via linear functional strategy.
\newblock {\em J. Complexity}, 43:51--75, 2017.

\bibitem{Rosenberg}
D.~Rosenberg, V.~Sindhwani, P.~Bartlett, and P.~Niyogi.
\newblock A kernel for semi-supervised learning with multi-view point cloud
  regularization.
\newblock {\em IEEE Signal Process. Mag.}, 26(5):145--150, 2009.

\bibitem{Rudi}
A.~Rudi, R.~Camoriano, and L.~Rosasco.
\newblock Less is more: Nystr{\"o}m computational regularization.
\newblock In {\em Advances in Neural Information Processing Systems}, pages
  1657--1665, 2015.

\bibitem{Sindhwani}
V.~Sindhwani and D.~S. Rosenberg.
\newblock An {RKHS} for multi-view learning and manifold co-regularization.
\newblock In {\em Proceedings of the 25th International Conference on Machine
  Learning}, pages 976--983. ACM, 2008.

\bibitem{Smola}
A.~J. Smola and B.~Sch{\"o}lkopf.
\newblock Sparse greedy matrix approximation for machine learning.
\newblock In {\em Proceedings of the 17th International Conference on Machine
  Learning}, pages 911--918. Morgan Kaufmann, 2000.

\bibitem{Sun}
S.~Sun.
\newblock Multi-view {L}aplacian support vector machines.
\newblock In {\em International Conference on Advanced Data Mining and
  Applications}, pages 209--222. Springer, 2011.

\bibitem{Torralba}
A.~Torralba, K.~P. Murphy, and W.~T. Freeman.
\newblock Sharing features: Efficient boosting procedures for multiclass object
  detection.
\newblock In {\em Proceedings of IEEE Computer Society Conference on Computer
  Vision and Pattern Recognition}, volume~2, pages 762--769. IEEE, 2004.

\bibitem{Vedaldi}
A.~Vedaldi, V.~Gulshan, M.~Varma, and A.~Zisserman.
\newblock Multiple kernels for object detection.
\newblock In {\em IEEE 12th International Conference on Computer Vision}, pages
  606--613. IEEE, 2009.

\bibitem{Williams}
C.~Williams and M.~Seeger.
\newblock Using the {N}ystr{\"o}m method to speed up kernel machines.
\newblock In {\em Proceedings of the 13th International Conference on Neural
  Information Processing Systems}, pages 661--667. MIT press, 2000.

\bibitem{Yu}
K.~Yu, V.~Tresp, and A.~Schwaighofer.
\newblock Learning {G}aussian processes from multiple tasks.
\newblock In {\em Proceedings of the 22nd International Conference on Machine
  Learning}, pages 1012--1019. ACM, 2005.

\bibitem{Zhang05}
J.~Zhang, Z.~Ghahramani, and Y.~Yang.
\newblock Learning multiple related tasks using latent independent component
  analysis.
\newblock In {\em International Conference on Neural Information Processing
  Systems}, pages 1585--1592, 2005.

\bibitem{Zhang}
T.~Zhang.
\newblock Effective dimension and generalization of kernel learning.
\newblock In {\em International Conference on Neural Information Processing
  Systems}, pages 454--461, MIT Press, Cambridge, MA, 2002.

\bibitem{Zhang15}
Y.~Zhang, J.~Duchi, and M.~Wainwright.
\newblock Divide and conquer kernel ridge regression: A distributed algorithm
  with minimax optimal rates.
\newblock {\em J. Mach. Learn. Res.}, 16:3299--3340, 2015.

\bibitem{NSL-KDD}
{\noop{ZZZ}``{NSL-KDD} data set for network-based intrusion detection systems.}
\newblock ''{A}vailable on: https://github.com/defcom17/{NSL\_KDD}.

\end{thebibliography}
\bibliographystyle{plain}
\end{document}